\documentclass{article}

\usepackage{arxiv}

\usepackage{amsmath,amssymb,amsfonts}%
\usepackage{amsthm}%
\usepackage{mathrsfs}%
\usepackage{algorithm}%
\usepackage{algorithmicx}%
\usepackage{algpseudocode}%
\usepackage{listings}%
\usepackage{multirow}%

\usepackage[utf8]{inputenc} 
\usepackage[T1]{fontenc}    
\usepackage{hyperref}       
\usepackage{url}            
\usepackage{booktabs}       
\usepackage{amsfonts}       
\usepackage{nicefrac}       
\usepackage{microtype}      
\usepackage{cleveref}       
\usepackage{lipsum}         
\usepackage{graphicx}
\usepackage[numbers]{natbib}
\usepackage{doi}
\usepackage{subcaption}

\newtheorem{theorem}{Theorem}[subsection]
\newtheorem{proposition}[theorem]{Proposition}
\newtheorem{definition}{Definition}[subsection]%
\newtheorem{lemma}{Lemma}[subsection]

\title{From level set evolution to threshold optimization: A grayscale level set framework for image segmentation}

\newif\ifuniqueAffiliation

\ifuniqueAffiliation 
\author{Xingkai Li\\
	School of Mathematics\\
	Harbin Institute of Technology\\
	Harbin, Heilongjiang, 150001, China.\\
	\texttt{24b912021@stu.hit.edu.cn}\\
	\And
	Jiebao Sun\\
	School of Mathematics\\
	Harbin Institute of Technology\\
	Harbin, Heilongjiang, 150001, China.\\
	\texttt{24b912021@stu.hit.edu.cn}\\
	\And
	Fanghui Song\\
	School of Mathematics\\
	Harbin Institute of Technology\\
	Harbin, Heilongjiang, 150001, China.\\
	\texttt{24b912021@stu.hit.edu.cn}\\
	\And
	Zhichang Guo\\
	School of Mathematics\\
	Harbin Institute of Technology\\
	Harbin, Heilongjiang, 150001, China.\\
	\texttt{24b912021@stu.hit.edu.cn}\\
}
\else
\usepackage{authblk}

\author[1]{Xingkai Li}%
\author[1]{Jiebao Sun}
\author[1]{Fanghui Song\thanks{\texttt{Corresponding author. Email:fanghuisong@stu.hit.edu.cn}}}
\author[1]{Zhichang Guo}
\affil[1]{School of Mathematics, Harbin Institute of Technology, Harbin, 150001, Heilongjiang, China.}
\fi

\begin{document}
\maketitle

\begin{abstract}
	The segmentation of multiple degradations has been a challenging problem in the field of image segmentation. Existing level set approaches commonly adopt a length regularization term to constrain the geometric shape of the segmentation contour. However, the introduction of the length term often results in numerical instability and high computational cost. In this paper, we show that the length term is not essential under certain smoothness constraints, and theoretically prove that the presence of the length term affects the property of $|\nabla \phi|=1$. Based on the finding, we define a class of smooth images, construct the grayscale level set, and propose a fast segmentation framework for degraded images, such as heavily noisy images and intensity inhomogeneous images. The framework transforms PDE evolution into one-dimensional threshold search, which has significant advantages in computational speed, especially on large-scale images. Experiments validate the segmentation performance of the proposed framework on various degraded images.
\end{abstract}

\keywords{ Image segmentation \and Regularization-free \and Grayscale level set \and Multiple degeneration}

\section{Introduction}\label{sec1}
Image segmentation is a fundamental problem in computer vision and image analysis, aiming to partition an image into meaningful regions and accurately delineate object boundaries \cite{4623242, 5754584, liu2022deep}. In practical imaging scenarios, however, images are often simultaneously affected by multiple degradation factors, such as noise, low contrast, and intensity inhomogeneity. These degradations obscure boundary information and significantly increase the difficulty of accurate segmentation. Therefore, developing robust segmentation methods for multi-degraded images remains an important and challenging problem\cite{liu2025convex, doi:10.1137/24M1676612}.

To address these challenges, existing image segmentation methods can be broadly divided into two categories: model-driven methods and data-driven methods. Model-driven methods include traditional segmentation methods, such as thresholding and watershed segmentation, and level set segmentation. Level set methods, proposed by Osher and Sethian \cite{OSHER198812} , have emerged as a powerful tool due to its ability to handle complex topological changes and high edge accuracy. In this framework, the segmentation problems are transformed into a variational formulation. Level set models can be broadly categorized into two classes: edge-based models such as the snake model \cite{1988Snakes} and geodesic active contour model \cite{466871}, and region-based models including the Mumford-Shah model \cite{1989Optimal} and Chan-Vese model \cite{902291}. To eliminate the need for re-initialization, Li et al. \cite{5557813} proposed the distance regularized level set evolution (DRLSE) model, which introduces a distance regularization term that gives the level set evolution with a unique forward-and-backward diffusion effect. Subsequently, Li et al. developed the region-scalable fitting (RSF) model \cite{4623242} and the local intensity clustering-based (LIC) fitting energy model \cite{5754584} to handle intensity inhomogeneity. Other researchers have incorporated various geometric priors, such as point distance shape \cite{LI2019123} and convex shape constraints \cite{9109685}, to achieve desired segmentation results. In response to the segmentation challenges posed by multiple degraded images, some scholars have attempted to construct integrated joint solution models, such as deeply coupling denoising modules or bias field correction modules with segmentation modules. For the images with intensity inhomogeneity, Ali et al. \cite{8341753} proposed the VLSGIS model ,which perform denoising during the segmentation process and correct the inhomogeneity. Zhou et al. \cite{zhou_seg} proposed a joint variation model to process the images with Gamma noise. Li et al. \cite{doi:10.1137/20M1310618} proposed a new three-stage segmentation framework based on a convex variant of the Mumford--Shah model and the intensity inhomogeneity information of an image.

Although existing variational level set methods have achieved remarkable success in many image segmentation tasks, they still suffer from several fundamental limitations. First, many models couple multiple energy terms within a single variational framework, making theoretical analysis of existence, uniqueness, and stability difficult. Second, most models rely on length regularization to maintain contour smoothness. However, the associated nonlinear evolution not only increases computational cost but also complicates numerical optimization. Finally, iterative level set evolution remains computationally expensive, particularly for large-scale images.


Recently, data-driven methods have achieved remarkable success across various visual tasks. Among these, convolutional neural networks (CNNs) have long served as the backbone for image segmentation. In particular, the symmetric encoder-decoder architecture U-Net and its variants \cite{ronneberger2015u, 10.1007/978-3-031-25066-8_9,CHEN2024103280} effectively integrate high-resolution contextual features with spatial details, enabling precise pixel-wise segmentation even under limited training data. To address the limitation caused by local receptive fields, Vision Transformers \cite{DBLP:journals/corr/abs-2010-11929} have emerged as a powerful alternative, offering superior global modeling capabilities through self-attention mechanisms. More recently, Segment Anything Model (SAM) \cite{2023Segment} has demonstrated impressive zero-shot segmentation capability. Although data-driven models have achieved remarkable performance on large-scale annoted datasets, their training cost and dependence on annotations remain important limitations.

To mitigate the interference of noise, texture, and grayscale variations on segmentation performance, image decomposition techniques have been widely incorporated into traditional segmentation workflows. The well-known Rudin-Osher-Fatemi (ROF) model \cite{RUDIN1992259} decomposes an image $f$ into a clean part $u$ and a noise part $n$ by minimizing an energy functional that balances fidelity and total variation regularization. Roy et al. \cite{doi:10.1137/24M167411X} proposed a three-component decomposition model that separates any grayscale image into its structural part, smooth part, and oscillatory part. Although image decomposition is always performed in a single stage, which may be mathematically ill-posed in some cases. The idea of separating structural information from degradation components provides an important motivation for the proposed framework.

Motivated by these observations, we propose a grayscale level set framework for image segmentation in this paper. The key idea is to transform a degraded image into a smooth image through a modular preprocessing stage and then restrict the admissible level set functions to those induced by gray-level thresholds. Consequently, the conventional continuous level set evolution is reformulated as a discrete threshold optimization problem, eliminating the need for explicit geometric regularization and substantially reducing the computational complexity.


Our main contributions in this work are the following:
\begin{enumerate}
	\item We introduce the concept of grayscale level set together with a new definition of smooth images, providing a reduced search space for regularization-free image segmentation.
	
	
	\item We establish the relationship between the grayscale level set space and the conventional level set space, and show that the optimal solution can be represented within the grayscale level set space under suitable smoothness assumptions.
	
	\item We develop an incremental update algorithm that reduces the computational complexity from
	from $\mathcal{O}(M \cdot N \cdot S)$ to $\mathcal{O}(N \cdot S)$.
	
\end{enumerate}

The remainder of this paper is organized as follows. Section 2 reviews the related work. Section 3 presents the proposed grayscale level set framework, including the underlying motivation, the proposed methodology, theoretical analysis, and complexity analysis. Section 4 validates the proposed method through extensive experiments and comparisons with representative approaches. Finally, Section 5 concludes the paper and discusses future research directions.

\section{Related Work}\label{sec2}

In this section, we review some relevant background knowledge that inspired our current work, including level set segmentation models and multi-task joint models.

\subsection{Region-Scalable Fitting Model}

The region-scalable fitting model(RSF) draws upon intensity information in local regions at a controllable scale. Compared with the global fitting energy proposed by CV model, RSF model defines the following local intensity fitting energy:
\begin{equation}
	\mathcal{E}_{\mathbf{x}}^{\text{Fit}}(\phi, f_{1}(\mathbf{x}), f_{2}(\mathbf{x})) = \sum_{i=1}^{2} \lambda_{i} \int_{\Omega} K_{\sigma}(\mathbf{x}-\mathbf{y}) | I(\mathbf{y}) - f_{i}(\mathbf{x}) |^{2}M_i(\phi(\mathbf{y})) \, d\mathbf{y}
\end{equation}
where $\lambda_{1}$ and $\lambda_2$ are positive constants, and $f_1(x)$ and $f_2(x)$ are two values that approximate image intensities in $\Omega_1$ and $\Omega_2$, respectively. Then, the model minimizes the energy $\mathcal{E}_{\mathbf{x}}^{\text{Fit}}$ for all $x$ in the image domain $\Omega$ by minimizing the integration of all central points in the image domain $\Omega$. The whole level set energy functional is as follows:
\begin{equation}
	\mathcal{E}\bigl(\phi, f_{1}(\mathbf{x}), f_{2}(\mathbf{x})\bigr) 
	= \int_{\Omega}\mathcal{E}_{\mathbf{x}}^{\text{Fit}}(C, f_{1}(\mathbf{x}), f_{2}(\mathbf{x}))d\mathbf{x}+\nu |C|
\end{equation}
The corresponding gradient descent flow is:
\begin{equation}
	\frac{\partial \phi}{\partial t} =\nu \delta_{\epsilon}(\phi) \operatorname{div}\left(\frac{\nabla \phi}{|\nabla \phi|}\right) -\delta_{\epsilon}(\phi)\left(\lambda_{1} e_{1} - \lambda_{2} e_{2}\right)
\end{equation}
where $\delta_{\epsilon}$ is the smoothed Dirac delta function and $e_1$ and $e_2$ are the functions $e_i(x)=\int K_{\sigma}(y-x)|I(x)-f_i(x)|^2d\mathbf{y}, i=1,2$.

\subsection{Multi-task Joint Models}

Several researchers have proposed joint models that simultaneously perform denoising and segmentation within a variational framework. Haider Ali et al. \cite{8341753} proposed a variational level set segmentation framework suitable for the  images which have intensity inhomogeneity, noise and a combination of both. 

\begin{equation}
	F^{TP}(\phi) = \mu \int_{\Omega}|\nabla H_{\epsilon}(\phi(\mathbf{x}))|d\mathbf{x} 
	+ (1-\lambda)\int_{\Omega}(z(\mathbf{x})-z_0(\mathbf{x}))^2d\mathbf{x} 
	+ \lambda \int_{\Omega}(\log z(\mathbf{x})+\frac{z_0(\mathbf{x})}{z(\mathbf{x})})d\mathbf{x}
\end{equation}
where $\mu$ and $\lambda$ are given weight constants and $\int_{\Omega}|H_{\epsilon}(\phi(\mathbf{x}))|d\mathbf{x}$ is length regularization term. This formulation enables the model to distinguish meaningful object boundaries from noise-induced artifacts. The corresponding gradient descent flow is as follows:
\begin{equation}
	\frac{\partial\phi}{\partial t} = \delta_{\epsilon}(\phi(\mathbf{x}))\mu\operatorname{div}\left(\frac{\nabla \phi}{|\nabla \phi|}\right)-D\delta_{\epsilon}(\phi(\mathbf{x}))\cdot		(2(1-\lambda)(z(\mathbf{x})-z_0(\mathbf{x})+\lambda\frac{z(\mathbf{x})-z_0(\mathbf{x})}{z(\mathbf{x})^2})
\end{equation}
with Neumann boundary conditions, and $D = (f_1(\mathbf{x})-f_2(\mathbf{x}))$. Experiments demonstrate that the proposed model achieves superior segmentation accuracy and robustness in the presence of severe intensity inhomogeneity and mixed noise conditions.
%

Similarly, Hsieh \cite{doi:10.1137/24M1676612} proposed a novel variational model based on an additive bias correction for image segmentation in the presence of noise and intensity inhomogeneity. The segmentation model includes TV denoising, local image smoothing energies, local bias-correction segmentation, and length regularization, formulated as
\begin{equation}
	\min_{(B, c, S, \chi) \in L^{2}(\Omega) \times \mathbb{R}^{m} \times BV(\Omega) \times X} \mathcal{E}(B, c, S, \chi)
\end{equation}
where the total energy functional $\mathcal{E}$ is given by
\begin{equation}
	\begin{aligned}
		&\mathcal{E}(B, c, S, \chi) = \frac{\nu}{2} \iint K_{s}(x-y)(I(y)-S(x))^{2} \, dy \, dx\\
		& + \sum_{i=1}^{m} \frac{\lambda_{i}}{2}\iint K_{\sigma}(x-y)(S(y)-B(x)-c_{i})^{2} \chi_{i}(y) \, dy \, dx \\
		&+ \mu \sum_{i=1}^{m} \sum_{\substack{j=1 \\ j \neq i}}^{m} \sqrt{\frac{\pi}{\tau}} \int_{\Omega} \chi_{i}(x)\left(G_{\tau} * \chi_{j}\right)(x) \, dx\\
		&+\alpha \int_{\Omega} |\nabla S(x)| \, dx\\
	\end{aligned}
\end{equation}

In addition, authors develop an ICT method to efficiently implement the proposed model. This model is robust for segmentation noisy images with intensity inhomogeneity correction, and noise removal.

\section{Grayscale Level Set Framework}\label{sec3}

\subsection{Motivation: Limitations of Length Regularization}\label{subsec31}

In traditional level set segmentation frameworks, regularization terms are essential components for ensuring stable evolution. Their core role is to constrain the morphology and evolution behavior of the level set function, compensating for deficiencies in the fitting term. Common regularization terms include length regularization, area regularization, and curvature regularization. The length regularization term has become one of the most widely used constraints because it effectively controls jagged distortions of the segmentation boundary, avoids isolated points, and maintains boundary smoothness. The length term typically takes the form:
\begin{equation}
	\mathcal{L}(\phi) = \int_{\Omega} \left| \nabla H(\phi) \right| \, d\mathbf{x}
\end{equation}
with corresponding gradient flow:
\begin{equation}
	\frac{\partial \phi}{\partial t} = \delta(\phi) \operatorname{div} \left( \frac{\nabla \phi}{|\nabla \phi|} \right)
\end{equation}
The gradient flow corresponding to the length regularization term can be regarded as the total variation flow at the zero level set. The total variation flow with Neumann boundary conditions is formalized as:
\begin{equation}
	\begin{cases}
		\dfrac{\partial u}{\partial t} = \operatorname{div} \left( \dfrac{\nabla u}{|\nabla u|} \right), & \text{in } Q = (0,\infty) \times \Omega, \\[8pt]
		\dfrac{\partial u}{\partial \eta} = 0, & \text{on } S = (0,\infty) \times \partial\Omega, \\[8pt]
		u(0, x) = u_0(x), & \text{in } x \in \Omega,
	\end{cases}
	\label{total variation flow}
\end{equation}
where $\Omega$ is a bounded set in $\mathbb{R}^{N}$ with Lipschitz continuous boundaty $\partial \Omega$ and $u_0\in L^1(\Omega)$.

\begin{lemma}
	Suppose $N=2$. Let $u_0\in L^2(\Omega)$ and $u(t,x)$ the unique weak solution of problem(\ref{total variation flow}). Then there exists a finite time $T_0$ such that
	\begin{equation}
		u(t) = (u_0)_{\Omega} = \frac{1}{\mathcal{L}^{N}(\Omega)} \int_{\Omega} u_0(x) \, dx \qquad \forall \, t \geq T_0
	\end{equation}
	\label{total variation flow theorem}
\end{lemma}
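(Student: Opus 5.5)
We argue in the framework of Andreu, Ballester, Caselles and Mazón for the Neumann problem of the total variation flow. There the solution is the semigroup generated by the subdifferential $\partial\Phi$ of the convex, lower semicontinuous functional $\Phi(u)=\int_\Omega|Du|$ (extended by $+\infty$ outside $BV(\Omega)\cap L^2(\Omega)$). We take existence and uniqueness of the weak (entropy) solution as known, together with the characterization $u_t=\operatorname{div}\mathbf z$, where $\mathbf z\in L^\infty$, $\|\mathbf z\|_\infty\le 1$, $(\mathbf z,Du)=|Du|$ and $[\mathbf z,\eta]=0$ on $\partial\Omega$.

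\textbf{Step 1 (mass conservation).} Integrating $u_t=\operatorname{div}\mathbf z$ over $\Omega$ and using the zero normal trace gives $\int_\Omega u(t)\,dx=\int_\Omega u_0\,dx$ for all $t$. Setting $v(t)=u(t)-(u_0)_\Omega$, we get that $v$ solves the same problem and has zero mean.

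\textbf{Step 2 (energy identity).} Test the equation with $v$ itself, which is justified by the Anzellotti pairing and the Green formula. This yields
\begin{equation*}
\frac12\frac{d}{dt}\|v(t)\|_{L^2}^2=-\int_\Omega|Dv(t)|,
\end{equation*}
valid for a.e. $t>0$. Regularization is not an issue here: $u(t)\in BV(\Omega)$ for every $t>0$ because $u_0\in L^2$ lies in the closure of the domain of $\partial\Phi$.

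\textbf{Step 3 (Sobolev--Poincaré in dimension two; the key step).} For $N=2$ and a bounded Lipschitz domain, the Poincaré--Sobolev inequality in $BV$ reads
\begin{equation*}
\|w-(w)_\Omega\|_{L^{N/(N-1)}}\le C\int_\Omega|Dw|.
\end{equation*}
Since $N/(N-1)=2$, this gives $\|v\|_{L^2}\le C_\Omega\int_\Omega|Dv|$ for zero-mean $v$. This is exactly where the hypothesis $N=2$ enters: in higher dimensions $L^2$ is not controlled by the total variation, and one would need $u_0\in L^N$ instead. Substituting into Step 2, and writing $y(t)=\|v(t)\|_{L^2}$, we obtain
\begin{equation*}
y\,y'\le -C_\Omega^{-1}\,y,\qquad\text{hence}\qquad y'(t)\le -C_\Omega^{-1}\ \text{ wherever } y(t)>0.
\end{equation*}
The main obstacle is making this rigorous. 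Because $y$ may fail to be differentiable, we work with $y^2$, which is absolutely continuous and nonincreasing. Whenever $y>0$ we have $\frac{d}{dt}(y^2)\le -2C_\Omega^{-1}y$, and therefore $y$ is Lipschitz with derivative at most $-C_\Omega^{-1}$ on $\{y>0\}$. Monotonicity (contraction of the semigroup toward the stationary solution $(u_0)_\Omega$) ensures that once $y$ vanishes it stays zero.

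\textbf{Step 4 (conclusion).} Integrating the differential inequality gives
\begin{equation*}
y(t)\le\bigl(\|u_0-(u_0)_\Omega\|_{L^2}-C_\Omega^{-1}t\bigr)^+ .
\end{equation*}
Hence $u(t)=(u_0)_\Omega$ for all $t\ge T_0$, with the explicit extinction time $T_0=C_\Omega\,\|u_0-(u_0)_\Omega\|_{L^2}$.
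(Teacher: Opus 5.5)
The paper gives no proof of this lemma. It is quoted, without proof or citation, from the Andreu--Ballester--Caselles--Maz\'on theory of the Neumann problem for the total variation flow, so there is nothing to compare step by step.

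Your argument is the standard proof from that theory, and it is correct. It combines conservation of the mean, the identity $\frac12\frac{d}{dt}\|u(t)-(u_0)_\Omega\|_{L^2}^2=-|Du(t)|(\Omega)$, and the Poincar\'e--Sobolev inequality with exponent $N/(N-1)=2$. Compared with the bare statement in the paper, it also gives the explicit bound $T_0\le C_\Omega\|u_0-(u_0)_\Omega\|_{L^2}$ and shows exactly where $N=2$ and $u_0\in L^2$ are used.

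There are two small points to tighten.

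First, Step~3 needs $\Omega$ connected. Without it, the Poincar\'e inequality fails, and so does the lemma itself. On two disjoint disks, a $u_0$ equal to different constants on each disk is a stationary solution that never reaches the global mean. Your word ``domain'' covers this, but the paper's setting (``a bounded set with Lipschitz boundary'') does not, so state the hypothesis explicitly.

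Second, for $u_0\in L^2\setminus BV$ the function $y$ need not be Lipschitz near $t=0$, since its derivative $-|Du(t)|(\Omega)/y(t)$ may be unbounded there. It is still absolutely continuous on $[0,t]$ whenever $y(t)>0$, which is all you need. Alternatively, integrate on $[\delta,t]$ and let $\delta\to0$ using $u\in C([0,\infty);L^2(\Omega))$.

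As an aside, Steps 1--2 can be obtained without the Anzellotti pairing. Since $\Phi$ is invariant under adding constants and positively $1$-homogeneous, every $w\in\partial\Phi(u)$ satisfies $\int_\Omega w\,dx=0$ and $\int_\Omega w\,u\,dx=\Phi(u)$. Combined with $-u_t\in\partial\Phi(u)$, this gives both identities at once.
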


\begin{figure}[htbp]
	\centering
	\begin{subfigure}[b]{0.25\textwidth}
		\centering
		\includegraphics[width=\linewidth]{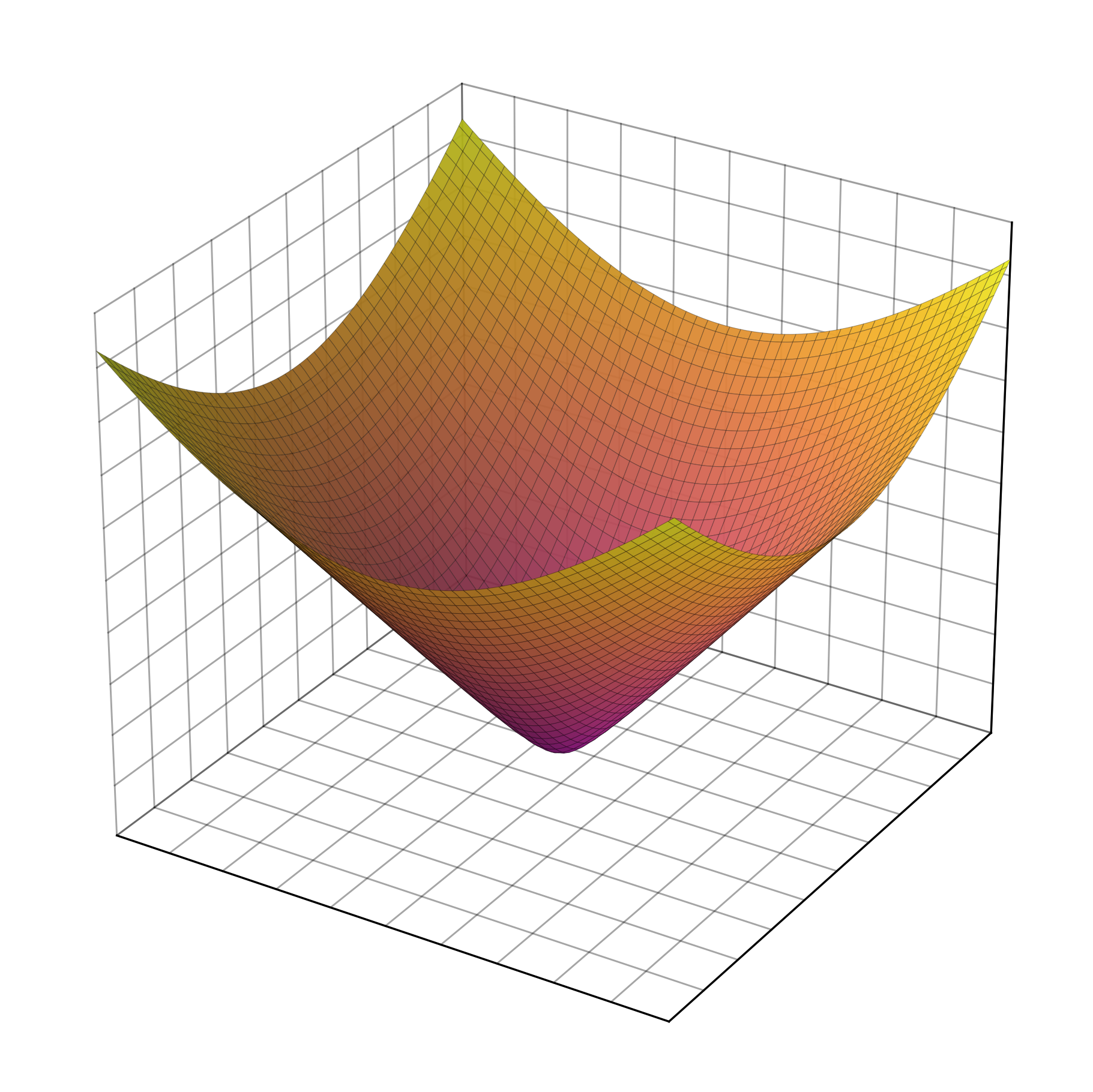}
		\caption{100 iters}
		\label{fig:100b}
	\end{subfigure}
	\begin{subfigure}[b]{0.25\textwidth}
		\centering
		\includegraphics[width=\linewidth]{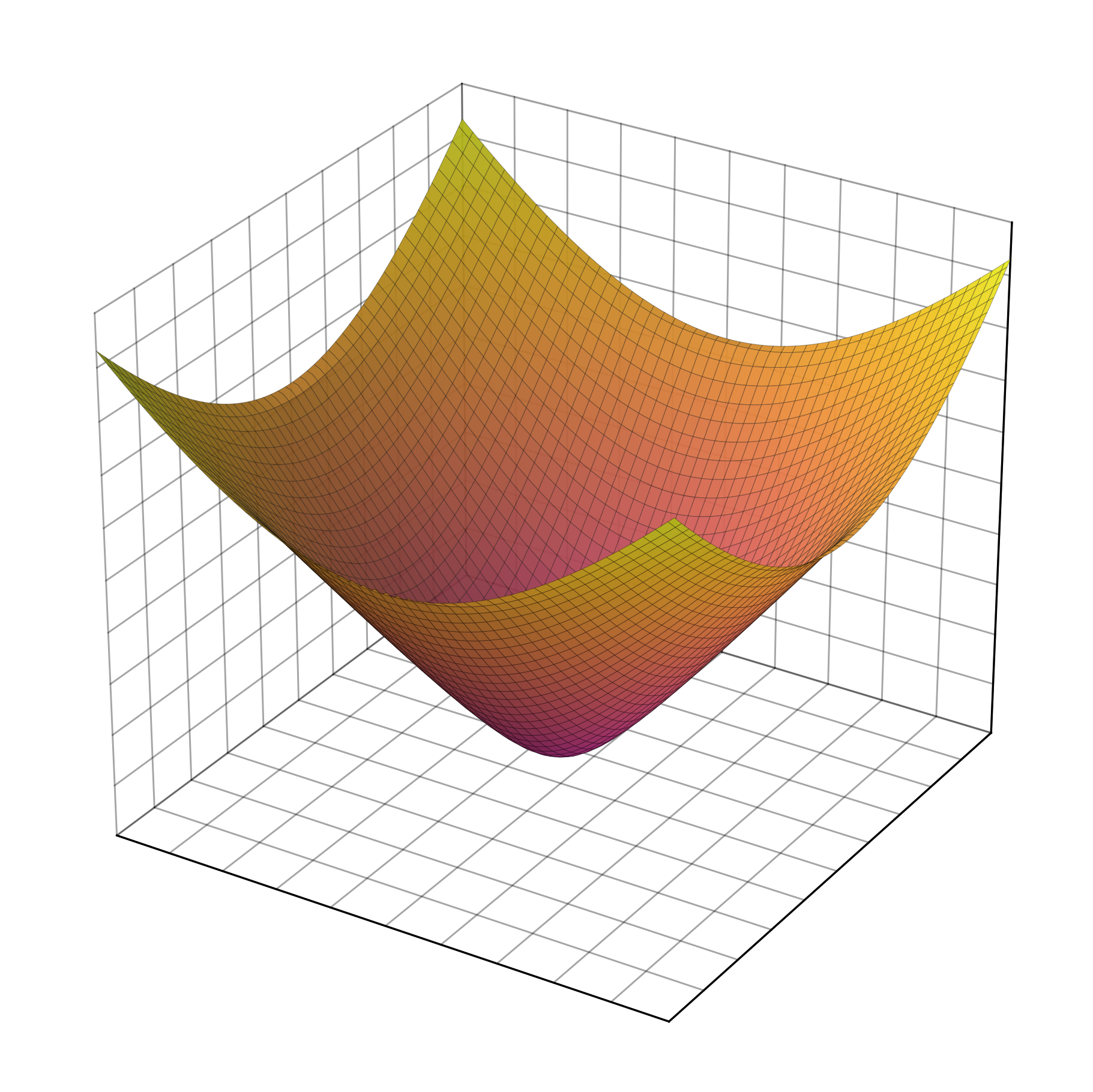}
		\caption{500 iters}
		\label{fig:500b}
	\end{subfigure}
	\begin{subfigure}[b]{0.25\textwidth}
		\centering
		\includegraphics[width=\linewidth]{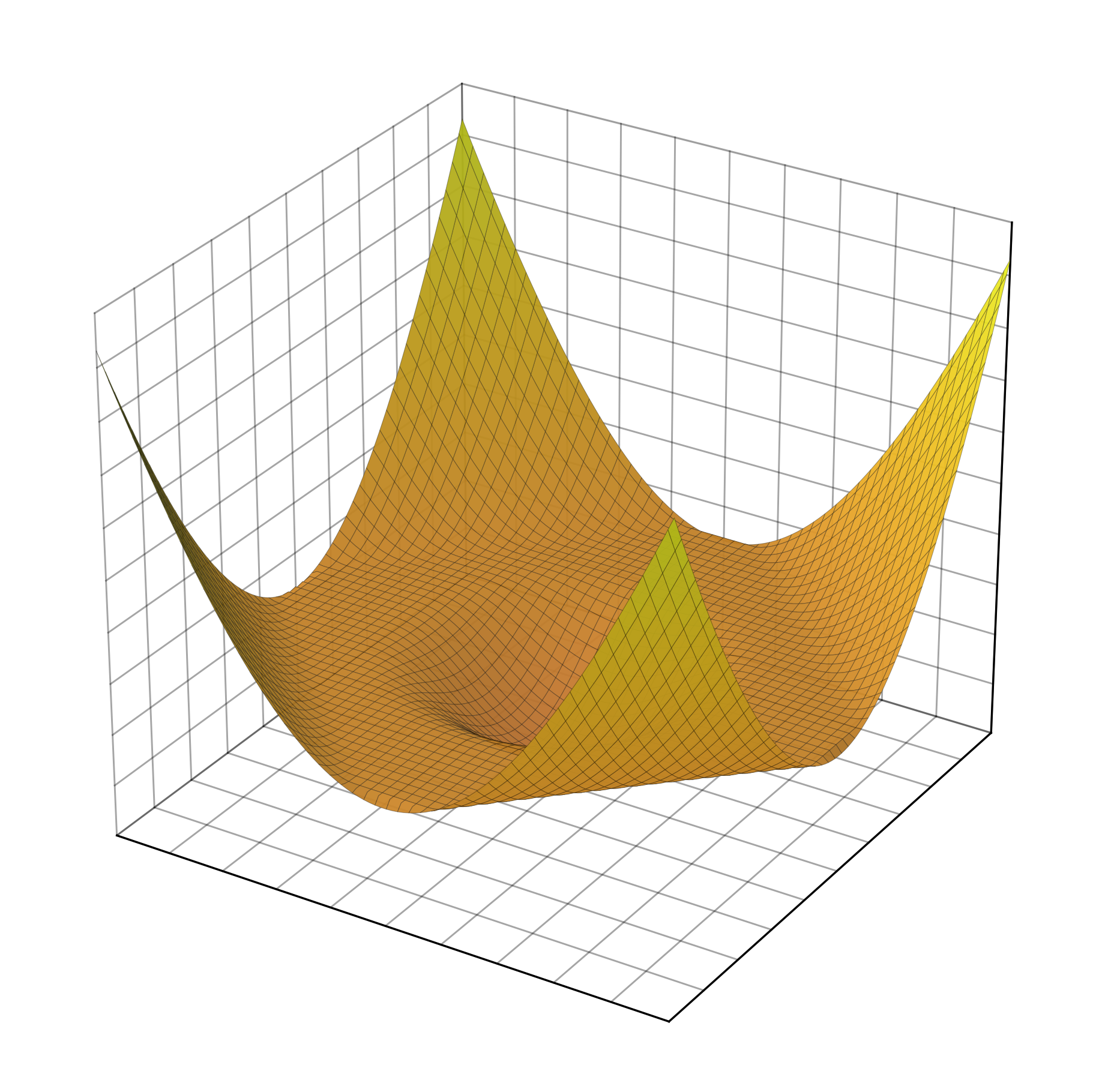}
		\caption{10000 iters}
		\label{fig:10000b_iter}
	\end{subfigure}
	\caption{Evolution of level set function corresponding to length regularization term with different iterations.}
	\label{fig:onlylength_evolution}
\end{figure}
\noindent For the total variation flow, we know that the solutions stabilize as $t\rightarrow 0$ from the lemma \ref{total variation flow theorem} and Fig. \ref{fig:onlylength_evolution}. The result indicates that curvature-driven smoothing tends to reduce gradient variations. 

Besides, traditional level set method usually adapt the distance regularization term to maintain the property of signed distance function avoiding frequent re-initialization. The coresponding gradient flow of distance regularization is as follows:
\begin{equation}
	\frac{\partial \phi}{\partial t} = \operatorname{div}(d(|\nabla\phi|\nabla\phi))
\end{equation}
While length term and distance regularization term are often combined to leverage their respective advantages, we find that there exists incompatibility from the evolutionary dynamics perspective: the length term constrains $|\nabla\phi|$ to approach $0$ near the zero level set region, while the distance regularization term enforces $|\nabla\phi|$ to approach $1$. The incompatibility is not merely a matter of computational overhead but represents a conflict that undermines the efficiency and stability of traditional methods.

\subsection{The Performance of Regularization-free}
From this perspective, we consider whether accurate and robust segmentation can be achieved by discarding these conflicting regularization terms entirely, only relying on a well-designed data fidelity term. To investigate this, we conduct a validation experiment on a real image severely corrupted by strong noise. Under such degradation, edge information is largely destroyed. As shown in Fig. \ref{3.1experimental} (d)-(f), traditional level set methods (CV, RSF, LIC) equipped with full regularization terms can only produce marginally acceptable results with poor performance on edge due to limited denoising capability. We then apply classical denoising to obtain a sufficiently smooth image and modify the three models by removing all regularization components (length and distance regularizers), retaining only the data-fitting energy term. Remarkably, as shown in Fig. \ref{3.1experimental} (g)-(i), all three regularization-free models yield clear, complete, and smoothly bounded segmentation results on the preprocessed smooth image. The segmentation quality is notably better than that of the original regularized models: no boundary distortion and isolated noise points.
\begin{figure}[htbp]
	\centering
	\begin{subfigure}{0.14\textwidth}
		\centering
		\includegraphics[scale=0.65]{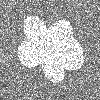}
		\caption{Degraded}
	\end{subfigure}
	\begin{subfigure}{0.14\textwidth}
		\centering
		\includegraphics[scale=0.65]{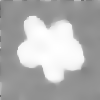}
		\caption{Smooth}
	\end{subfigure}
	\begin{subfigure}{0.14\textwidth}
		\centering
		\includegraphics[scale=0.65]{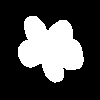}
		\caption{GT}
	\end{subfigure}
	\begin{subfigure}{0.14\textwidth}
		\centering
		\includegraphics[scale=0.65]{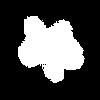}
		\caption{CV Result}
	\end{subfigure}
	\begin{subfigure}{0.14\textwidth}
		\centering
		\includegraphics[scale=0.65]{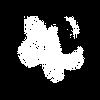}
		\caption{RSF Result}
	\end{subfigure}
	\begin{subfigure}{0.14\textwidth}
		\centering
		\includegraphics[scale=0.65]{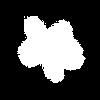}
		\caption{LIC Result}
	\end{subfigure}
	\begin{subfigure}{0.14\textwidth}
		\centering
		\includegraphics[scale=0.65]{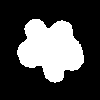}
		\caption{CV-Free}
	\end{subfigure}
	\begin{subfigure}{0.14\textwidth}
		\centering
		\includegraphics[scale=0.65]{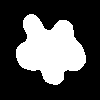}
		\caption{RSF-Free}
	\end{subfigure}
	\begin{subfigure}{0.14\textwidth}
		\centering
		\includegraphics[scale=0.65]{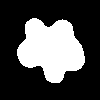}
		\caption{LIC-Free}
	\end{subfigure}
	\begin{subfigure}{0.14\textwidth}
		\centering
		\includegraphics[scale=0.85]{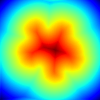}
		\caption{CV LSF}
	\end{subfigure}
	\begin{subfigure}{0.14\textwidth}
		\centering
		\includegraphics[scale=0.85]{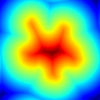}
		\caption{RSF LSF}
	\end{subfigure}
	\begin{subfigure}{0.14\textwidth}
		\centering
		\includegraphics[scale=0.85]{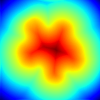}
		\caption{LIC LSF}
	\end{subfigure}
	\caption{The results of variation segmentation model.}
	\label{3.1experimental}
\end{figure}

More importantly, the level set functions obtained from these regularization-free models (Fig. \ref{3.1experimental} (j)-(l)) exhibit desirable properties, which morphology corresponds to the ideal subspace of level set functions for segmentation tasks. The underlying reason is that on a sufficiently smooth image, the grayscale distributions of the target and background regions become well-separable and their intensity ranges no longer overlap in a way that would mislead the fitting energy. Thus the fitting term alone provides sufficient discriminative power to distinguish the two regions.

Motivated by this finding, we discard the regularization design of traditional level sets and adopt a two-stage strategy: first preprocess the degraded image into a sufficiently smooth representation, then perform segmentation solely by optimizing the refined data-fitting energy.

\subsection{Grayscale Level Set Framework}\label{subsec32}
The observations in Section~\ref{subsec31} motivate the proposed grayscale level set framework. In this subsection, we introduce the proposed methodology, describe its efficient implementation, and present the corresponding complexity and theoretical analyses.

\subsubsection{Preceding Preprocessing}
For images with multiple degradations, the traditional level set method usually couples image denoising, bias field correction and boundary geometric regularization in the same complex variational functional. This highly coupled model will face difficulties in numerical calculation. In this framework, preceding preprocessing constrains images to prepare for regularization-free segmentation.

For digital images in real scenarios, the imaging process can be mathematically described by several general models with clear physical interpretation, covering the widely existing additive noise and multiplicative noise scenarios in computer vision tasks \cite{doi:10.1137/24M1676612}\cite{5754584}.

\textbf{Additive model} for images from incoherent imaging system with additive noise and additive bias field:
\begin{equation}
	I(x) = B(x) + J(x) + N(x)
\end{equation}

\textbf{Multiplicative model} for images with multiplicative speckle noise, such as synthetic aperture radar (SAR) images:
\begin{equation}
	I(x) = B(x) * J(x) * N(x)
\end{equation}

\textbf{Mixture model} for images with additive noise and intensity inhomogeneity multiplicative bias field, commonly appearing in medical imaging, such as MRI or CT:
\begin{equation}
	I(x) = B(x) * J(x) + N(x)
	\label{Mixture model}
\end{equation}
where $x\in\Omega$ denotes the pixel coordinate in the 2D bounded open image domain $\Omega\subset\mathbb{R}^2$, and the three components have clear physical meanings:
\begin{itemize}
	\item $J(x)$: The structural component, which carries the core semantic information of the image.
	\item $B(x)$: The bias component, describing the low frequency slowly varying intensity distribution caused by imaging system bias field. 
	\item $N(x)$: The oscillatory component, containing high frequency interferences such as imaging noise.
\end{itemize}

\begin{figure}[htbp]
	\centering
	\begin{subfigure}{0.20\textwidth}
		\centering
		\includegraphics[scale=1.2]{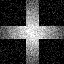}
		\caption{$I(x)$}
	\end{subfigure}
	\begin{subfigure}{0.20\textwidth}
		\centering
		\includegraphics[scale=1.2]{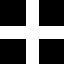}
		\caption{$J(x)$}
	\end{subfigure}
	\begin{subfigure}{0.20\textwidth}
		\centering
		\includegraphics[scale=1.2]{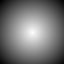}
		\caption{$B(x)$}
	\end{subfigure}
	\begin{subfigure}{0.20\textwidth}
		\centering
		\includegraphics[scale=1.2]{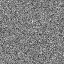}
		\caption{$N(x)$}
	\end{subfigure}
	\caption{Visualization of Mixture Model (\ref{Mixture model})}
\end{figure}
The first step of our framework is to construct a generalized preprocessing mapping operator $S_\sigma: BV(\Omega) \rightarrow H^1(\Omega)$ parameterized by a scale $\sigma$, which maps the observed image $I(x)$ to a smooth image $J(x) $ with good properties.
\begin{equation}
	J(x) = S_\sigma(I(x))
\end{equation}
Preprocessing mapping operator $S_{\sigma}$ has a high degree of modularity and scalability. Common choices include total variation denoising, Gaussian filtering, bias field correction, dictionary learning, or deep learning-based enhancement modules.

Below, we define a smooth image to measure whether the selected preprocessing operator is suitable for subsequent segmentation
\begin{definition}
	Let $\Omega\subset \mathbb{R}^2$ be a Lipschitz region. The image $J:\Omega\rightarrow\mathbb{R}$ is called $H^1$ \textbf{smooth image} if it satisfies the following conditions:
	\begin{itemize}
		\item $J\in H^1(\Omega)$, and there exists a constant $L>0$ such that: 
		\begin{equation}
			|\nabla J(x)|\leq L \quad a.e. in\ \Omega.
		\end{equation}
		\item For target partition $(\Omega_1, \Omega_2)$, there exists a threshold $c^*$ such that:
		\begin{equation*}
			\sup_{\Omega_{2}} J < c^* < \inf_{\Omega_{1}}J
		\end{equation*}
	\end{itemize}
	\label{definition_smooth_image}
\end{definition}
\noindent The first point of the definition controls image noise, and the second point provides guarantee for subsequent segmentation.

\subsubsection{Grayscale Level Set Segmentation Method}
Under the condition of smooth images, we can directly use image gray information to induce a subset of the traditional level set, which is called \textbf{grayscale level set} here.
\begin{definition}
	Let $J:\Omega\rightarrow\mathbf{R}$ be a smooth image satisfying definition \ref{definition_smooth_image}. For any gray-level threshold $c\in[C_{min},C_{max}]$, we define the two regions:
	\begin{equation*}
		\Omega_{1}=\{x:J(x)>c\}, \Omega_{2}=\Omega\backslash\Omega_{1},
	\end{equation*}
	denote the interface:
	\begin{equation*}
		\Gamma_c=\partial\Omega_{1}(c),
	\end{equation*}
	and the \textbf{grayscale level set} can be defined as: 
	\begin{equation}
		\Phi_{c}(x) = \{\phi_c(x) = SDF(\Gamma_c) | c\in[C_{min},C_{max}]\}
	\end{equation}
	where $SDF(\Gamma_c)$ is the signed distance function defined on $\Gamma_c$.
\end{definition}

In fact, the grayscale level set is the contour lines of a smooth continuous grayscale surface $J (x) $truncated at different heights. Fig: \ref{geometric interpretation} shows the zero level contour lines corresponding to different grayscale level sets which naturally have good geometric properties.
\begin{figure}[htbp]
	\centering
	\begin{subfigure}{0.18\textwidth}
		\centering
		\includegraphics[width=0.95\textwidth]{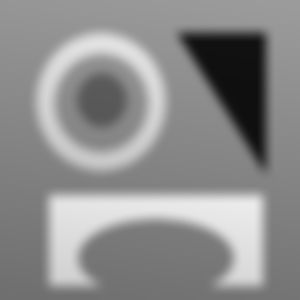}
		\caption{$J(x)$}
	\end{subfigure}
	\hspace{30pt}
	\begin{subfigure}{0.18\textwidth}
		\centering
		\includegraphics[width=0.95\textwidth]{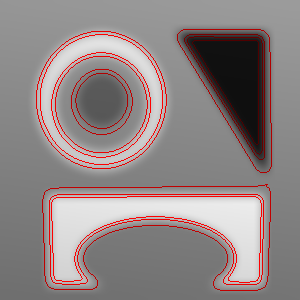}
		\caption{Contour}
	\end{subfigure}
	\caption{The Gray Contour of Smooth Images}
	\label{geometric interpretation}
\end{figure}

\begin{proposition}
	The grayscale level set is a subspace of the traditional level set, i.e
	\begin{equation*}
		\Phi_c(x)\subset\Phi(x)
	\end{equation*}
	\label{proposition1}
\end{proposition}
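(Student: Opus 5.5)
The plan is to fix precisely what $\Phi(x)$ means and then check that each element of $\Phi_c$ satisfies that definition. The paper never defines the traditional level set space explicitly, so I take the standard one implicit in Section~\ref{subsec31} and in the distance-regularized formulation: $\Phi$ is the set of Lipschitz functions $\phi:\Omega\to\mathbb{R}$ whose zero level set $\{\phi=0\}$ is a closed interface $\Gamma$ separating $\Omega$ into $\{\phi>0\}$ and $\{\phi<0\}$. For the purposes of this inclusion, $\Phi$ may also be taken as the subclass of signed distance functions satisfying $|\nabla\phi|=1$ a.e. The inclusion $\Phi_c\subset\Phi$ then amounts to showing that, for every $c\in[C_{min},C_{max}]$, the function $\phi_c=SDF(\Gamma_c)$ is such an admissible function.

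First, for a fixed $c$ I will check that $\Omega_1(c)=\{J>c\}$ is a well-defined set up to null sets and that $\Gamma_c=\partial\Omega_1(c)$ is a closed subset of $\overline\Omega$. Since $J\in H^1(\Omega)$ with $|\nabla J|\le L$ a.e. on the Lipschitz domain $\Omega$, $J$ has a Lipschitz representative. Hence $J$ is continuous, $\Omega_1(c)$ is relatively open, and $\Gamma_c\subset\{J=c\}$. If $\Gamma_c=\emptyset$ (for example $c\ge C_{max}$), I adopt the convention $\phi_c\equiv-\infty$ or exclude these degenerate endpoint values. In the nondegenerate case I set
\[
\phi_c(x)=\begin{cases}\ \ \operatorname{dist}(x,\Gamma_c), & x\in\Omega_1(c),\\ -\operatorname{dist}(x,\Gamma_c), & x\in\Omega\setminus\Omega_1(c).\end{cases}
\]

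Second, I verify the defining properties. The map $x\mapsto\operatorname{dist}(x,\Gamma_c)$ is $1$-Lipschitz. Crossing from $\Omega_1$ to $\Omega_2$ forces a path to meet $\Gamma_c$, where both branches vanish, so the sign flip preserves the Lipschitz bound and $\phi_c$ is $1$-Lipschitz on $\Omega$. By Rademacher's theorem, $\phi_c\in W^{1,\infty}(\Omega)\subset H^1(\Omega)$. The zero level set is $\Gamma_c$, with $\{\phi_c>0\}=\Omega_1(c)$ and $\{\phi_c<0\}=\Omega_2(c)$ minus the interface. Thus $\phi_c$ induces exactly the partition $(\Omega_1(c),\Omega_2(c))$, and $H(\phi_c)=\chi_{\Omega_1(c)}$, which is what the fitting energies require.

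If $\Phi$ is meant to consist of signed distance functions, I additionally need $|\nabla\phi_c|=1$ a.e. I expect this to be the main obstacle. I would handle it with the classical fact that the distance function to a closed set $K$ is differentiable at almost every point off $K$ with gradient of unit norm, namely the unit vector toward a unique nearest point. It then remains to ensure that $\Gamma_c$ has Lebesgue measure zero. I would argue this via the coarea formula: $\{J=c\}$ has positive measure for at most countably many $c$. For those exceptional $c$, on the set $\{J=c\}$ we have $\nabla J=0$ a.e., and the boundary $\partial\{J>c\}$ is still a topological boundary; I would either exclude this countable set of thresholds or note that on the digital grid it is irrelevant. With these steps, every $\phi_c\in\Phi_c$ lies in $\Phi$, which gives the proposition.
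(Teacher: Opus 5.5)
Your proposal is correct and takes the same route as the paper. The paper's entire proof is the observation that each $\phi_c=SDF(\Gamma_c)$ is itself an admissible level set function and so lies in $\Phi$; you supply what it leaves implicit: a definition of $\Phi$, Lipschitz regularity, the identification of the zero level set with $\Gamma_c$, and $|\nabla\phi_c|=1$ a.e.

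The caveats you flag are genuine, not cosmetic: $SDF(\emptyset)$ is undefined when $\Omega_1(c)=\emptyset$, and under the signed-distance reading of $\Phi$ the exceptional thresholds really must be excluded. For example, take $J=c_0+\operatorname{dist}(\cdot,K)$ with $K\subset\Omega$ closed, of positive measure and with empty interior; then $\Gamma_{c_0}\supseteq K$, so $\nabla\phi_{c_0}=0$ a.e.\ on $K$.
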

\begin{proof}
	For any threshold $c$, we have $$\phi_c(x)=SDF(\Gamma _c)\in\Phi.$$ 
	Hence, $\Phi_c\subset\Phi$.
\end{proof}
\noindent Besides, the grayscale level set has the above relationship with the traditional level set. The traditional level set method updates the level set function through variational methods, which can easily get stuck in local minima and be computationally complex. The grayscale level set method compresses the entire search domain to the grayscale level of the image itself, which has significant computational advantages.

We obtains the final segmentation result by minimizing the following energy. The optimal threshold $c^*$ is obtained by minimizing the regularization-free local region scalable fitting energy:

\begin{equation}
	c^{*} = \arg\min_{c \in [C_{min},C_{max}]} \sum_{i=1}^{2} \lambda_i\int_{\Omega} \int_{\Omega_i(c)} K_{\sigma}(x - y) \,\bigl|J(y) - f_i(x; c)\bigr|^2 \,dy \,dx,
	\label{RSF}
\end{equation}
where $\lambda_{1}, \lambda_{2}>0$ are weight coefficients, $K_{\sigma}$ is a truncated Gaussian kernel with standard deviation $\sigma$ for controlling the local fitting scale, and the two regions are defined as $\Omega_{1}(c)=\{x\in\Omega|J(x)>c\}$ and $\Omega_{2}(c)=\Omega\backslash\Omega_{1}(c)$. The expressions of the local intensity fitting functions $f_1(x;c)$ and $f_2(x;c)$ are:
\begin{equation}
	f_i(x; c) = \frac{\int_{\Omega_i(c)} K_{\sigma}(y - x) J(y) \, dy}{\int_{\Omega_i(c)} K_{\sigma}(y - x) \, dy}, \quad i = 1, 2.
	\label{local fitting term}
\end{equation}
The final segmentation boundary is defined as the zero level set of the optimal gray level set function:
\begin{equation}
	\Gamma^{*} = \partial\bigl\{x \in \Omega : J(x) > c^{*}\bigr\}
\end{equation}

In the actual calculation process, we found that the local fitting function $f_i(x,c)$ and energy $\mathcal{E}$ have to be recalculated globally for each threshold $c$, which will lead to a large number of repeated calculations. Inspired by the narrowband method \cite{ADALSTEINSSON1995269}, we propose the incremental update implementation. When $c$ increases, the region division changes only in the gray level interval $[c, c+\Delta c]$, and the change of corresponding $f_i(x,c)$ and $\mathcal{E}$ only occurs in the neighborhood of these gray levels, so we only need to update in the area when calculating the local intensity function and energy. The complete incremental update implementation is summarized in Algorithm \ref{alg:incremental update}.

\begin{algorithm}[h]
	\caption{Incremental update implementation}
	\label{alg:incremental update}
	\begin{algorithmic}
		\Require Normalized smooth image $J(x)$, threshold step $\Delta c$, Kernel $K_\sigma$
		\Ensure Optimal segmentation boundary $\Gamma^*$
		
		\Statex
		\State \textbf{Step 1: Initialization}
		\State 1. Determine the range of grayscale values in $J$: $[C_{min}, C_{max}]$.
		\State 2. Initialize $c = C_{min}$, $c^* = C_{min}$, $E_{min} = \infty$.
		\State 3. Initialize regions: $\Omega_1(C_{min}) = \emptyset$, $\Omega_2(C_{min}) = \Omega$ and compute initial global energy $E(C_{min})$.
		\Statex
		\State \textbf{Step 2: Incremental Threshold Search}
		\For{$c = C_{min}$ to $C_{max}$ with step $\Delta c$}
		\State 1. Identify pixel set: $\Delta \Omega(c) = \{x \in \Omega \mid c \le J(x) < c + \Delta c\}$.
		\If{$\Delta \Omega(c)$ is empty}
		\State \textbf{continue} to next threshold.
		\EndIf
		\State 2. Update region partitions.
		\State 3. Incremental update the local fitting function $f_i(x,c)$ for pixels $x$ within the local neighborhood $B(y, \sigma)$ of any $y \in \Delta \Omega(c)$.
		\State 4. Update energy variation $\Delta E$ based on the updated $f_1, f_2$ and compute $E(c)$.
		\If{$E(c) < E_{min}$}
		\State $E_{min} \leftarrow E(c)$ $\quad  c^* \leftarrow c$.
		\EndIf
		\EndFor\\
		\Return segmentation boundary $\Gamma^* = \partial\Omega_1(c^*)$.
	\end{algorithmic}
\end{algorithm}

\subsubsection{Complexity Analysis}
Let $N$ be the total number of pixels in the image $\Omega$, $S$ be the number of pixels within the compact support domain of the kernel $K_\sigma$ (where $S \approx \pi \sigma^2$), and $M$ be the total number of discrete threshold steps ($M = \frac{C_{max} - C_{min}}{\Delta c}$).In the direct implementation, for each threshold $c$, the local fitting functions $f_i(x, c)$ and the energy functional $E(c)$ require integrating over the entire image domain. The convolution operation at each step demands $O(N \cdot S)$ operations. Over $M$ steps, the total computational complexity is:
\begin{equation*}
	\mathcal{O} = \mathcal{O}(M \cdot N \cdot S)
\end{equation*}

The incremental update algorithm (Algorithm~\ref{alg:incremental update}) is based on the observation that only a sparse subset of pixels, denoted by $\Delta\Omega(c)$, changes its region assignment at each threshold update. The key observation is that throughout the entire search process from $C_{min}$ to $C_{max}$, each pixel in the image transitions from $\Omega_2$ to $\Omega_1$ exactly once.
Therefore, the sum of the cardinalities of all changing sets across all $M$ steps is exactly equal to the total number of pixels $N$:
\begin{equation*}
	\sum_{k=1}^{M} |\Delta \Omega(c_k)| = N
\end{equation*}
The total computational effort over the entire optimization process transforms from repeated global calculations to a single global initialization plus a series of sparse local updates:
\begin{equation*}
	\begin{aligned}
		\mathcal{O}_{Algo1} &= \mathcal{O}(N \cdot S) + \sum_{k=1}^{M} \mathcal{O}(|\Delta \Omega(c_k)| \cdot S) \\
		&= \mathcal{O}(N \cdot S) + \mathcal{O}(N \cdot S) \\
		&= \mathcal{O}(N \cdot S)
	\end{aligned}
\end{equation*}
Algorithm \ref{alg:incremental update} demonstrates that our method effectively decouples the computational complexity from the number of discrete thresholds $M$, which significantly improves the computational efficiency

\subsubsection{Theoretical Analysis}
In this part, we theoretically analyze the existence and global optimality of gray level set solution.
\begin{theorem}
	For any smooth image \( J \in H^1(\Omega) \) satisfying the constraints of Definition \ref{definition_smooth_image}, the induced grayscale level set segmentation curves \( \Gamma_c \) exist almost everywhere, and their one-dimensional Hausdorff measure (i.e., curve perimeter) is bounded, satisfying
	\[
	\int_{C_{\min}}^{C_{\max}} \mathcal{H}^1(\Gamma_c) \, dc \leq L \cdot |\Omega| < \infty,
	\]
	where \( |\Omega| \) denotes the Lebesgue measure of the domain \( \Omega \).
	\label{thm:existence_finite_length}
\end{theorem}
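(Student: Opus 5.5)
The bound follows from the coarea formula for Sobolev functions combined with the gradient bound in Definition~\ref{definition_smooth_image}. Since $J\in H^1(\Omega)$ and $|\nabla J|\le L$ a.e., and $\Omega$ is a bounded Lipschitz domain, $J$ belongs to $W^{1,1}(\Omega)$; indeed $J\in W^{1,\infty}(\Omega)$. For such functions the coarea formula (Federer; Evans--Gariepy for $W^{1,1}$/BV) gives
\begin{equation*}
	\int_{\Omega} |\nabla J(x)|\,dx = \int_{-\infty}^{\infty} \operatorname{Per}\bigl(\{J>c\};\Omega\bigr)\,dc .
\end{equation*}
For a.e.\ $c$ the perimeter $\operatorname{Per}(\{J>c\};\Omega)$ equals $\mathcal{H}^1(\partial^*\{J>c\}\cap\Omega)$, where $\partial^*$ is the reduced boundary. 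We will identify $\Gamma_c$ with this set, understood in the measure-theoretic sense. Integrating over $[C_{\min},C_{\max}]$ rather than all of $\mathbb{R}$ only decreases the right-hand side. Bounding the left-hand side by $\int_\Omega L\,dx = L|\Omega|$ then gives the inequality.

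The steps, in order, are as follows.
\begin{enumerate}
	\item Record that $J\in W^{1,1}(\Omega)$ with $\|\nabla J\|_{L^1}\le L|\Omega|<\infty$; this uses boundedness of $\Omega$.
	\item Invoke the coarea formula for BV/Sobolev functions.
	\item Deduce that $c\mapsto \mathcal{H}^1(\Gamma_c)$ is measurable and integrable. Consequently $\mathcal{H}^1(\Gamma_c)<\infty$ for a.e.\ $c$, so $\{J>c\}$ is a set of finite perimeter for a.e.\ $c$, which is the ``existence almost everywhere'' claim.
	\item Restrict the range of integration to $[C_{\min},C_{\max}]$ and conclude.
\end{enumerate}
One could also note that for a.e.\ $c$ the level set $\{J=c\}$ has Lebesgue measure zero on $\{\nabla J\neq 0\}$. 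This shows that the distinction between $\{J>c\}$ and $\{J\ge c\}$ is immaterial.

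The main obstacle is reconciling the topological boundary $\Gamma_c=\partial\Omega_1(c)$ used in the paper with the measure-theoretic boundary that the coarea formula controls. The topological boundary can be strictly larger than $\partial^*$, and then it is not controlled by the perimeter. The first approach would be to use a continuous representative: $J\in W^{1,\infty}$ on a Lipschitz domain is Lipschitz, hence continuous. Then apply the smooth-case coarea formula $\int_\Omega|\nabla J|\,dx=\int \mathcal{H}^1(J^{-1}(c))\,dc$, which holds for Lipschitz maps with the full preimage $J^{-1}(c)$. Since $\partial\{J>c\}\cap\Omega\subset J^{-1}(c)$ by continuity, we get $\mathcal{H}^1(\Gamma_c)\le\mathcal{H}^1(J^{-1}(c))$, and the bound follows directly with the topological boundary. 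Any portion of $\Gamma_c$ lying on $\partial\Omega$ should be excluded by interpreting $\Gamma_c$ relative to $\Omega$. This is the convention under which the stated inequality is meaningful.
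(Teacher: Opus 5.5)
Your proposal is correct and follows the same route as the paper: the coarea formula $\int_{\mathbb{R}}\mathcal{H}^1(\{J=c\}\cap\Omega)\,dc=\int_\Omega|\nabla J|\,dx$, then restricting the $c$-integral to $[C_{\min},C_{\max}]$, then the bound $|\nabla J|\le L$. Your final paragraph does three things the paper leaves tacit: it takes the Lipschitz representative, uses $\partial\{J>c\}\cap\Omega\subset J^{-1}(c)$, and reads $\Gamma_c$ relative to $\Omega$, which justifies identifying $\Gamma_c$ with $\{J=c\}\cap\Omega$ (needed, since counting portions of $\Gamma_c$ on $\partial\Omega$ can break the inequality, e.g.\ for $J(x)=x_1$ on the unit square).
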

\begin{proof}
	By the co-area formula, for a Sobolev function \( J \in H^1(\Omega) \) defined on a bounded Lipschitz domain \( \Omega \), the integral of the total variation is related to the edge lengths of essential level sets by the following identity:
	\[
	\int_{-\infty}^{\infty} \mathcal{H}^1(\{x \in \Omega \mid J(x) = c\} \cap \Omega) \, dc = \int_{\Omega} |\nabla J(x)| \, dx.
	\]
	Since \( J(\Omega) \subset [C_{\min}, C_{\max}] \), the level sets for \( c \notin [C_{\min}, C_{\max}] \) are empty (with zero measure), so the integration interval on the left-hand side can be reduced to \( [C_{\min}, C_{\max}] \). Moreover, by condition 1 in Definition \ref{definition_smooth_image}, the weak gradient satisfies the global uniform bound \( |\nabla J(x)| \leq L \) a.e. Therefore, the right-hand side can be estimated as
	\[
	\int_{\Omega} |\nabla J(x)| \, dx \leq \int_{\Omega} L \, dx = L \cdot |\Omega|.
	\]
	Because the domain is bounded (\( |\Omega| < \infty \)), we obtain
	\[
	\int_{C_{\min}}^{C_{\max}} \mathcal{H}^1(\Gamma_c) \, dc \leq L \cdot |\Omega| < \infty.
	\]
\end{proof}

\begin{theorem}[Optimality in the grayscale-level-set space]
	Let $J:\Omega\to [C_{\min},C_{\max}]$ be a smooth image satisfying the
	separation condition: there exist a partition $(\Omega_1^*,\Omega_2^*)$
	and a threshold $c^*\in[C_{\min},C_{\max}]$ such that
	\[
	\Omega_1^*=\{x\in\Omega:J(x)>c^*\},\qquad
	\Omega_2^*=\Omega\setminus\Omega_1^* .
	\]
	Assume that $(\Omega_1^*,\Omega_2^*)$ is a global minimizer of the
	regularization-free fitting energy over the admissible partition class
	$\mathcal A$. Then
	\[
	\min_{c\in[C_{\min},C_{\max}]}\mathcal E_{\rm gray}(c)
	=
	\min_{(\Omega_1,\Omega_2)\in\mathcal A}\mathcal E(\Omega_1,\Omega_2).
	\]
\end{theorem}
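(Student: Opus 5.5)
The plan is to prove the equality as two opposite inequalities. First, make explicit the convention implicit in the statement: for $c\in[C_{\min},C_{\max}]$ we set $\mathcal E_{\rm gray}(c):=\mathcal E(\Omega_1(c),\Omega_2(c))$, where $\Omega_1(c)=\{x\in\Omega:J(x)>c\}$ and $\Omega_2(c)=\Omega\setminus\Omega_1(c)$. Here $\mathcal E$ is the regularization-free local fitting energy (\ref{RSF}), with $f_i$ given by (\ref{local fitting term}). We also assume that every threshold partition lies in the admissible class $\mathcal A$. This is the partition-level analogue of Proposition \ref{proposition1}: since $\phi_c=SDF(\Gamma_c)\in\Phi$, the partition it induces is admissible. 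If $\mathcal A$ is defined through level set functions, this inclusion follows directly from Proposition \ref{proposition1}; otherwise I would state it as the standing hypothesis under which $\mathcal E_{\rm gray}$ is compared with $\mathcal E$.

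\textbf{Step 1 (upper bound).} Let $m:=\inf_{(\Omega_1,\Omega_2)\in\mathcal A}\mathcal E(\Omega_1,\Omega_2)$. By hypothesis this infimum is attained at $(\Omega_1^*,\Omega_2^*)$, and by the separation condition $(\Omega_1^*,\Omega_2^*)=(\Omega_1(c^*),\Omega_2(c^*))$ with $c^*\in[C_{\min},C_{\max}]$. Hence
\[
\inf_{c\in[C_{\min},C_{\max}]}\mathcal E_{\rm gray}(c)\le \mathcal E_{\rm gray}(c^*)=\mathcal E(\Omega_1^*,\Omega_2^*)=m .
\]

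\textbf{Step 2 (lower bound).} For every $c$, the pair $(\Omega_1(c),\Omega_2(c))$ belongs to $\mathcal A$ by the inclusion above. Therefore $\mathcal E_{\rm gray}(c)\ge m$, and taking the infimum over $c$ gives $\inf_c\mathcal E_{\rm gray}(c)\ge m$. Combined with Step 1 the two infima coincide. Moreover the infimum over $c$ is attained at $c^*$, so it is a genuine minimum, and the minimum on the right is attained at $(\Omega_1^*,\Omega_2^*)$. This yields the stated equality of minima without any compactness or continuity argument in $c$.

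\textbf{Main obstacle: well-definedness of $\mathcal E$.} I expect the delicate point to be whether $\mathcal E$, and hence $\mathcal E_{\rm gray}$, is defined at the thresholds in question. The ratio in (\ref{local fitting term}) degenerates wherever $\int_{\Omega_i(c)}K_\sigma(y-x)\,dy=0$, for instance at $c=C_{\max}$ where $\Omega_1(c)=\emptyset$.

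\begin{itemize}
\item I would first adopt the convention that the integrand is taken to be $0$ wherever this denominator vanishes. This is legitimate because in that case $K_\sigma(x-\cdot)\chi_{\Omega_i}=0$ a.e., so the whole contribution vanishes whatever value $f_i$ takes.
\item Next I would note that $f_i$ is bounded by $C_{\max}$ in absolute value, since it is a weighted average of values of $J$. Together with $J\in H^1(\Omega)\subset L^2(\Omega)$ and $\Omega$ bounded, this makes every energy finite.
\item Finally, the a.e.\ existence of the interfaces $\Gamma_c$ from Theorem \ref{thm:existence_finite_length} ensures that the threshold partitions are measurable, with finite perimeter for a.e.\ $c$. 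This supports their admissibility even when $\mathcal A$ is taken to be a finite-perimeter class.
\end{itemize}
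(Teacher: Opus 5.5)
Your argument is correct and matches the paper's proof: the lower bound comes from the threshold partitions $(\Omega_1(c),\Omega_2(c))$ lying in $\mathcal A$, which the paper simply asserts and you state as an explicit hypothesis. The upper bound comes from evaluating at $c^*$, since the separation condition identifies the global minimizer with $(\Omega_1(c^*),\Omega_2(c^*))$; your well-definedness remarks go beyond the paper and are harmless, except for a small slip: $f_i\in[C_{\min},C_{\max}]$ gives $|f_i|\le\max(|C_{\min}|,|C_{\max}|)$ in general, not $|f_i|\le C_{\max}$.
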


\begin{proof}
	Since each threshold $c$ induces an admissible partition
	\[
	\Omega_1(c)=\{x\in\Omega:J(x)>c\},\qquad
	\Omega_2(c)=\Omega\setminus\Omega_1(c),
	\]
	the grayscale-level-set partitions form a subset of the admissible
	partition class $\mathcal A$. Hence,
	\[
	\min_{c\in[C_{\min},C_{\max}]}\mathcal E_{\rm gray}(c)
	\geq
	\min_{(\Omega_1,\Omega_2)\in\mathcal A}\mathcal E(\Omega_1,\Omega_2).
	\]
	
	On the other hand, by the separation assumption, the global minimizer
	$(\Omega_1^*,\Omega_2^*)$ can be represented by the threshold $c^*$, i.e.,
	\[
	(\Omega_1^*,\Omega_2^*)=(\Omega_1(c^*),\Omega_2(c^*)).
	\]
	Therefore,
	\begin{equation*}
		\min_{c\in[C_{\min},C_{\max}]}\mathcal E_{\rm gray}(c)
		\leq
		\mathcal E_{\rm gray}(c^*)
		=
		\mathcal E(\Omega_1^*,\Omega_2^*)
		=
		\min_{(\Omega_1,\Omega_2)\in\mathcal A}\mathcal E(\Omega_1,\Omega_2).
	\end{equation*}
	Combining the two inequalities yields the desired equality.
\end{proof}

\section{Experimental Results}\label{sec4}
In this section, we conduct comprehensive experiments to evaluate the effectiveness, computational efficiency, and robustness of the proposed grayscale level set segmentation framework. The proposed method is assessed on both synthetic and real-world images under various degradation conditions, and is compared with several representative level set models using quantitative metrics and qualitative visual results.

\subsection{Evaluation Metrics}
To quantitatively evaluate the proposed framework, five metrics are adopted. Dice, IoU, and HD95 are used to assess segmentation accuracy, while the boundary length and mean absolute curvature (MAC) are employed to evaluate the geometric quality of the resulting contours.

\begin{itemize}
	\item \textbf{Dice}: The Dice is employed to quantify the spatial overlap accuracy between the segmented region and the ground truth.
	\begin{equation*}
		\text{Dice}(A, B) = \frac{2 |A \cap B|}{|A| + |B|}
	\end{equation*}
	
	\item \textbf{IoU}: The IoU evaluates region similarity by calculating the ratio of the intersection to the union of the predicted and ground-truth domains.
	\begin{equation*}
		\text{IoU}(A, B) = \frac{|A \cap B|}{|A \cup B|}
	\end{equation*}
	
	\item \textbf{HD95}: The HD95 measures the 95th percentile of the bidirectional directed distances between the segmented boundary $\partial A$ and the ground-truth boundary $\partial B$, which effectively eliminates the adverse impact of outlier noises.
	\begin{equation*}
		P_{95} \left( \max_{a \in \partial A} \min_{b \in \partial B} \|a - b\|, \max_{b \in \partial B} \min_{a \in \partial A} \|b - a\| \right)
	\end{equation*}
	\item \textbf{Length}: The Length metric characterizes the geometric perimeter of the segmentation boundary via discrete curvilinear integration. It is utilized to quantitatively appraise boundary regularity.
	\begin{equation*}
		\text{Length}(\partial A) = \int_{\partial A} \text{d}s
	\end{equation*}
	
	\item \textbf{MAC}: The MAC reflects geometric smoothness of the segmented boundary. A lower MAC value indicates a smoother boundary.
	\begin{equation*}
		\text{MAC}(\partial A) = \frac{1}{\mathcal{L}(\partial A)} \int_{\partial A} |\kappa(s)| \text{d}s
	\end{equation*}
\end{itemize}

\subsection{Validation on Synthetic Images with Intensity Inhomogeneity and Low Contrast}

To verify the segmentation performance of the proposed algorithm and clarify the algorithm's adaptability to image degradation factors, we design three typical complex scenarios: intensity inhomogeneity, noise, and low contrast. For image 1, we add additive Gaussian noise ($\mu=0$, $\sigma=0.2$) and a gradient bias field from the top left corner to the bottom right corner. Image 2 simulates the low contrast case with Gaussian noise($\mu=0$, $\sigma=0.1$) and image 3 integrates various degradation of intensity inhomogeneity, low contrast and noise.

According to the imaging model described in Section 3.2.1, each synthetic image is first processed by BM3D \cite{4271520}, PDE filtering \cite{WEN2021108049}, N4ITK \cite{5445030} etc. After obtaining the smooth image, we set $\lambda_{1}=\lambda_{2}=1$ for image 1 and $\lambda_{1}=1, \lambda_{2}=4$ for image 2 respectively, and selected the Gaussian kernel($\sigma=3$) for grayscale level set segmentation. Fig. \ref{4.0.1} and fig. \ref{4.0.2} illustrate the segmentation results of the proposed model. Even with strong noise influence, the method successfully separates the foreground from the background and accurately extracts the target boundary. We not only use the dice value as a reference, but also introduce the length and curvature to evaluate the segmentation effect. The specific performance indicators are shown in the tab \ref{tab:segmentation_metrics}.

\begin{figure}[htbp]
	\centering
	\begin{subfigure}{0.22\textwidth}
		\centering
		\includegraphics[scale=0.35]{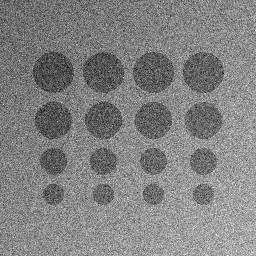}
		\caption{Degraded}
	\end{subfigure}
	\begin{subfigure}{0.22\textwidth}
		\centering
		\includegraphics[scale=0.35]{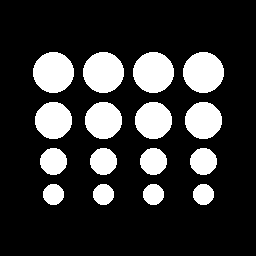}
		\caption{Ground Truth}
	\end{subfigure}
	\begin{subfigure}{0.22\textwidth}
		\centering
		\includegraphics[scale=0.35]{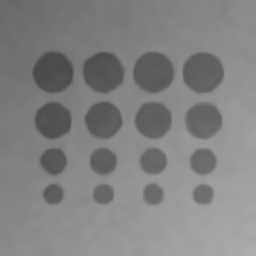}
		\caption{Processed}
	\end{subfigure}
	\begin{subfigure}{0.22\textwidth}
		\centering
		\includegraphics[scale=0.35]{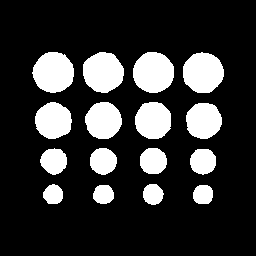}
		\caption{Result}
	\end{subfigure}
	\caption{Results of Image 1}
	\label{4.0.1}
\end{figure}

\begin{figure}[htbp]
	\centering
	\begin{subfigure}{0.22\textwidth}
		\centering
		\includegraphics[width=0.9\textwidth]{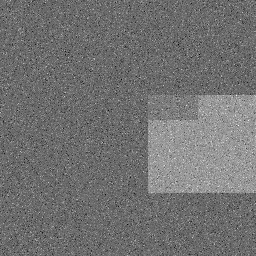}
		\caption{Degraded}
	\end{subfigure}
	\centering
	\begin{subfigure}{0.22\textwidth}
		\centering
		\includegraphics[width=0.9\textwidth]{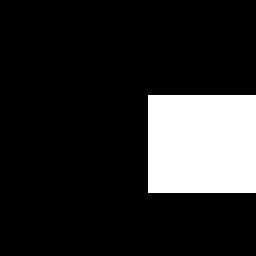}
		\caption{Ground Truth}
	\end{subfigure}
	\centering
	\begin{subfigure}{0.22\textwidth}
		\centering
		\includegraphics[width=0.9\textwidth]{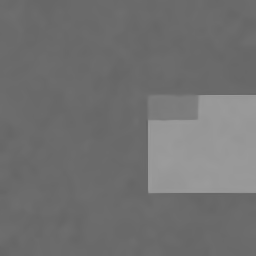}
		\caption{Processed}
	\end{subfigure}
	\centering
	\begin{subfigure}{0.22\textwidth}
		\centering
		\includegraphics[width=0.9\textwidth]{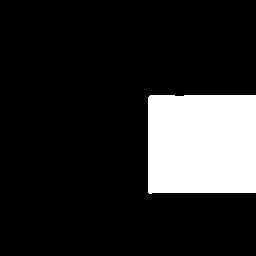}
		\caption{Result}
	\end{subfigure}
	\caption{Results of Image 2}
	\label{4.0.2}
\end{figure}

\begin{table}
	\centering
	\caption{Segmentation Performance Metrics}
	\label{tab:segmentation_metrics}
	\begin{tabular}{ccccc}
		\hline
		\textbf{Method} & \textbf{Dice} & \textbf{IoU} & \textbf{Length} & \textbf{MAC}\\
		\hline
		Image 1           & 0.9799        & 0.9606         & 3311.11    & 0.2905 \\
		Image 2     & 0.9992        & 0.9985          & 623.80 & 0.0296     \\
		\hline
	\end{tabular}
\end{table}
Fig. \ref{4.0.3} (a-d) show the bias correction result and its segmentation on image 3. For comparison, the classical RSF model is applied to the same inhomogeneous image under different initial contour settings. As shown in fig. \ref{4.0.3} (e-h), the RSF model exhibits strong sensitivity to initialization: different initial contours lead to distinctly different segmentation results. Using gray level as a constraint to select the subspace of level set function for energy calculation can avoid the impact of initialization on segmentation results, which is also the advantage of gray level set.

\begin{figure}[htbp]
	\centering
	\begin{subfigure}{0.22\textwidth}
		\centering
		\includegraphics[scale=0.7]{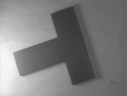}
		\caption{Original}
	\end{subfigure}
	\begin{subfigure}{0.22\textwidth}
		\centering
		\includegraphics[scale=0.7]{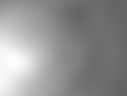}
		\caption{bias}
	\end{subfigure}
	\begin{subfigure}{0.22\textwidth}
		\centering
		\includegraphics[scale=0.7]{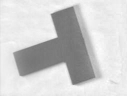}
		\caption{corrected}
	\end{subfigure}
	\begin{subfigure}{0.22\textwidth}
		\centering
		\includegraphics[scale=0.7]{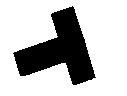}
		\caption{result}
	\end{subfigure}
	\begin{subfigure}{0.22\textwidth}
		\centering
		\includegraphics[scale=1]{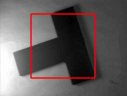}
		\caption{Init1}
	\end{subfigure}
	\begin{subfigure}{0.22\textwidth}
		\centering
		\includegraphics[scale=1]{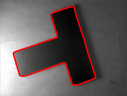}
		\caption{Case1}
	\end{subfigure}
	\begin{subfigure}{0.22\textwidth}
		\centering
		\includegraphics[scale=1]{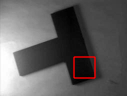}
		\caption{Init2}
	\end{subfigure}
	\begin{subfigure}{0.22\textwidth}
		\centering
		\includegraphics[scale=1]{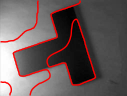}
		\caption{Case2}
	\end{subfigure}
	\caption{Results of Image 3}
	\label{4.0.3}
\end{figure}

\subsection{Validation on Real Images}

To further evaluate the practical applicability of the proposed grayscale level set method, we conduct experiments on several types of real-world images: white blood cell (WBC) images from the publicly available blood cell segmentation dataset, T1-weighted brain MRI volumes from the BrainWeb database, vessel images and crack images. These data cover a wide range of degradation characteristics, including intensity inhomogeneity, low contrast, weak boundaries, and structured noise, which are common challenges in medical and industrial imaging.

\begin{table*}
	\centering
	\caption{WBC dataset Segmentation Performance Metrics}
	\label{tab:wbc dataset}
	\begin{tabular}{cccccc}
		\hline
		\textbf{Image} & \textbf{Dice} & \textbf{IoU} & \textbf{HD95} & \textbf{Length} & \textbf{MAC}\\
		\hline
		029             & 0.9882        & 0.9767       & 1.0000    & 750.22 & 0.3037     \\
		031             & 0.9583        & 0.9200       & 2.2360   & 540.36 & 0.3375     \\
		081             & 0.9795        & 0.9597       &  1.4142    & 696.62 & 0.3526    \\
		230             & 0.9884        & 0.9770       & 1.0000   & 885.79 & 0.3473   \\
		285             & 0.8850        & 0.7936       &  6.3560    & 700.28 & 0.3148  \\
		290             & 0.9245        & 0.8596       &  4.2426    & 758.61 & 0.1617    \\
		\hline
	\end{tabular}
\end{table*}

For WBC dataset, a total of 6 representative samples (indexed 029, 031, 081, 230, 285, and 290 in this analysis) are selected to show model performance. Fig. \ref{WBC} illustrates the segmentation performance. The proposed method is still effective in low contrast noisy images. Tab. \ref{tab:wbc dataset} shows the details of metrics, including Dice, IoU, Accuracy, F1 and other geometric features. For BrainWeb dataset, we select the slice with 20\% intensity non-uniformity and 3\% noise for the experiment. Fig. \ref{BrainWeb} and tab. \ref{tab:brainweb} show the partial results of segmentation and its metrics. Besides, we choose the image in the DIAS dataset to segment the intracranial arteries in fig. \ref{DIAS}.

\begin{figure}[htbp]
	\centering
	\setlength{\tabcolsep}{3pt} 
	\begin{tabular}{c*{6}{c}} 
		\multicolumn{1}{c}{\textbf{}} & 
		\multicolumn{1}{c}{\textbf{029}} & 
		\multicolumn{1}{c}{\textbf{031}} & 
		\multicolumn{1}{c}{\textbf{081}} & 
		\multicolumn{1}{c}{\textbf{230}} & 
		\multicolumn{1}{c}{\textbf{285}} & 
		\multicolumn{1}{c}{\textbf{290}} \\
		\rotatebox{90}{\textbf{Original}} &
		\includegraphics[width=0.12\textwidth]{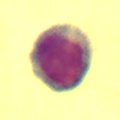} &
		\includegraphics[width=0.12\textwidth]{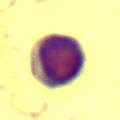} &
		\includegraphics[width=0.12\textwidth]{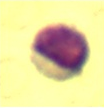} &
		\includegraphics[width=0.12\textwidth]{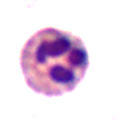} &
		\includegraphics[width=0.12\textwidth]{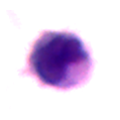} &
		\includegraphics[width=0.12\textwidth]{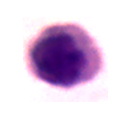} \\
		\rotatebox{90}{\textbf{\ \ \ GT}} &
		\includegraphics[width=0.12\textwidth]{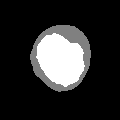} &
		\includegraphics[width=0.12\textwidth]{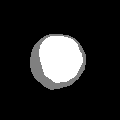} &
		\includegraphics[width=0.12\textwidth]{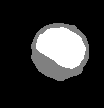} &
		\includegraphics[width=0.12\textwidth]{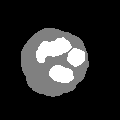} &
		\includegraphics[width=0.12\textwidth]{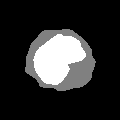} &
		\includegraphics[width=0.12\textwidth]{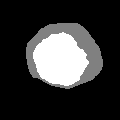} \\
		\rotatebox{90}{\textbf{Smooth}} &
		\includegraphics[width=0.12\textwidth]{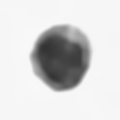} &
		\includegraphics[width=0.12\textwidth]{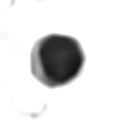} &
		\includegraphics[width=0.12\textwidth]{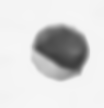} &
		\includegraphics[width=0.12\textwidth]{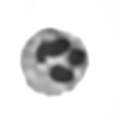} &
		\includegraphics[width=0.12\textwidth]{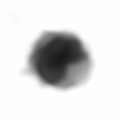} &
		\includegraphics[width=0.12\textwidth]{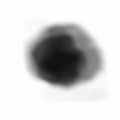} \\
		\rotatebox{90}{\textbf{Results}} &
		\includegraphics[width=0.12\textwidth]{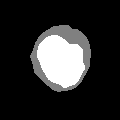} &
		\includegraphics[width=0.12\textwidth]{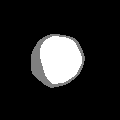} &
		\includegraphics[width=0.12\textwidth]{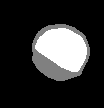} &
		\includegraphics[width=0.12\textwidth]{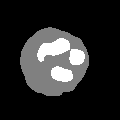} &
		\includegraphics[width=0.12\textwidth]{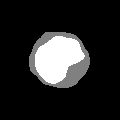} &
		\includegraphics[width=0.12\textwidth]{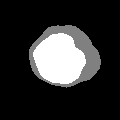} \\
	\end{tabular}
	\caption{Results of WBC dataset}
	\label{WBC}
\end{figure}

\begin{figure}[htbp]
	\centering
	\setlength{\tabcolsep}{3pt} 
	\begin{tabular}{c c c c c c} 
		\multicolumn{1}{c}{\textbf{}} & 
		\multicolumn{1}{c}{\textbf{060}} & 
		\multicolumn{1}{c}{\textbf{063}} & 
		\multicolumn{1}{c}{\textbf{086}} & 
		\multicolumn{1}{c}{\textbf{089}} & 
		\multicolumn{1}{c}{\textbf{098}} \\ 
		\rotatebox{90}{\textbf{\ \ \ Original}} &
		\includegraphics[width=0.15\textwidth]{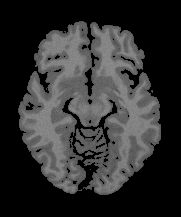} &
		\includegraphics[width=0.15\textwidth]{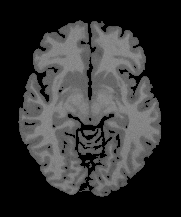} &
		\includegraphics[width=0.15\textwidth]{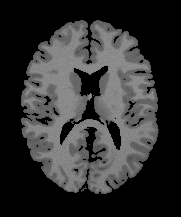} &
		\includegraphics[width=0.15\textwidth]{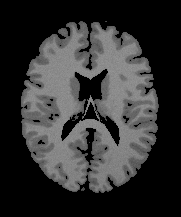} &
		\includegraphics[width=0.15\textwidth]{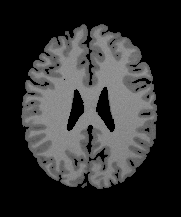} \\
		\rotatebox{90}{\textbf{\ \ \ \ \ GT}} &
		\includegraphics[width=0.15\textwidth]{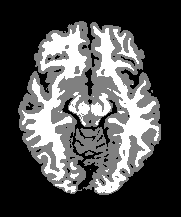} &
		\includegraphics[width=0.15\textwidth]{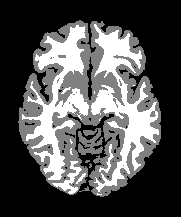} &
		\includegraphics[width=0.15\textwidth]{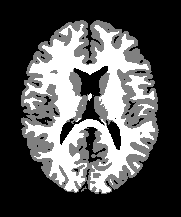} &
		\includegraphics[width=0.15\textwidth]{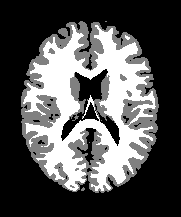} &
		\includegraphics[width=0.15\textwidth]{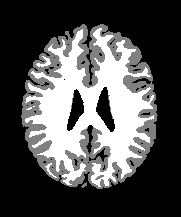} \\
		\rotatebox{90}{\textbf{\ \ Smooth}} &
		\includegraphics[width=0.15\textwidth]{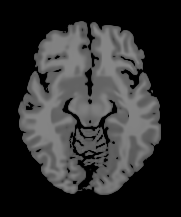} &
		\includegraphics[width=0.15\textwidth]{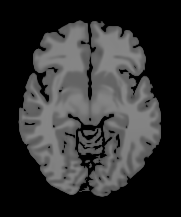} &
		\includegraphics[width=0.15\textwidth]{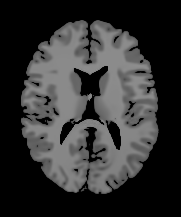} &
		\includegraphics[width=0.15\textwidth]{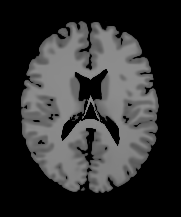} &
		\includegraphics[width=0.15\textwidth]{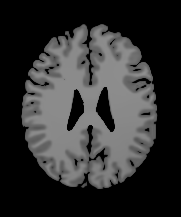} \\
		\rotatebox{90}{\textbf{\ \ \ Results}} &
		\includegraphics[width=0.15\textwidth]{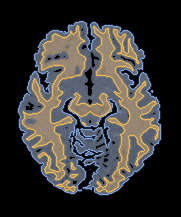} &
		\includegraphics[width=0.15\textwidth]{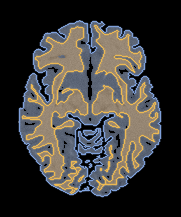} &
		\includegraphics[width=0.15\textwidth]{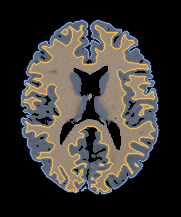} &
		\includegraphics[width=0.15\textwidth]{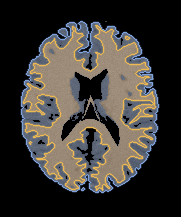} &
		\includegraphics[width=0.15\textwidth]{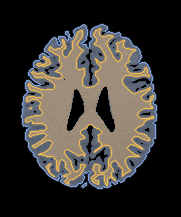} \\
	\end{tabular}
	\caption{Results of BrainWeb dataset}
	\label{BrainWeb}
\end{figure}

\begin{table}
	\centering
	\caption{BrainWeb dataset Segmentation Performance Metrics}
	\label{tab:brainweb}
	\begin{tabular}{ccccc} 
		\hline
		\multirow{2}{*}{\textbf{Image}} 
		& \multicolumn{2}{c}{\textbf{Gray Matter}}   
		& \multicolumn{2}{c}{\textbf{White Matter}}  
		\\
		& \textbf{Dice} & \textbf{IoU} & \textbf{Dice} & \textbf{IoU} 
		\\
		\hline
		060 & 0.9593 & 0.9218 & 0.9398 & 0.8864 \\
		063 & 0.9561 & 0.9159 & 0.9420 & 0.8903 \\
		086 & 0.9509 & 0.9064 & 0.9554 & 0.9145 \\
		089 & 0.9469 & 0.8991 & 0.9606 & 0.9243 \\
		098 & 0.9609 & 0.9247 & 0.9719 & 0.9453 \\
		\hline
	\end{tabular}
\end{table}

\begin{figure}[htbp]
	\centering
	\begin{subfigure}{0.15\textwidth}
		\centering
		\includegraphics[width=\textwidth]{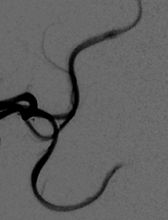}
		\captionsetup{font=tiny}
		\caption{Degraded}
	\end{subfigure}
	\begin{subfigure}{0.15\textwidth}
		\centering
		\includegraphics[width=\textwidth]{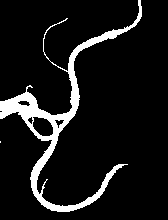}
		\captionsetup{font=tiny}
		\caption{GT}
	\end{subfigure}
	\begin{subfigure}{0.15\textwidth}
		\centering
		\includegraphics[width=\textwidth]{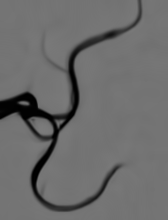}
		\captionsetup{font=tiny}
		\caption{Processed}
	\end{subfigure}
	\begin{subfigure}{0.15\textwidth}
		\centering
		\includegraphics[width=\textwidth]{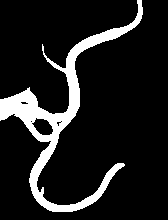}
		\captionsetup{font=tiny}
		\caption{Result}
	\end{subfigure}
	\caption{Results of DIAS dataset}
	\label{DIAS}
\end{figure}

These results demonstrate that the proposed grayscale level set method consistently achieves high segmentation accuracy across diverse real-world imaging modalities. The performance is particularly notable on vessel datasets, where thin and low contrast structures are reliably extracted without any length regularization. 

\subsection{Comparison with Variation Joint Model}
For images with strong multiplicative gamma noise and intensity inhomogeneity, Zhou \cite{zhou_seg} proposed a variational joint model, whose corresponding variational denoising module is as follows:
\begin{equation}
	\inf_{U} \lambda \int_{\Omega} \left( \log u + \frac{\varphi(\tilde{f})}{u} \right) dx + \int_{\Omega} \alpha(x) \, |Du|
	\label{Zhou denoising}
\end{equation}
where $U=\{u \in BV(\Omega) \mid u > 0\}$, $\varphi(\cdot)$ is a nonlinear function and $\alpha(x)$ depends on the enhanced image.

For a fair comparison, we integrate the denoising module of the variational joint model into the proposed grayscale level set framework while keeping the denoising parameters unchanged. Fig.~\ref{4.2.1} presents the segmentation results of the two methods. Although slight differences can be observed in the denoising results, both methods accurately delineate the target region and preserve the principal contour structures, demonstrating comparable segmentation accuracy.

From the perspective of the optimization mechanism, the variational joint model performs collaborative optimization of the denoising and segmentation modules, resulting in a strong coupling between the two tasks. Since image denoising and image segmentation correspond to different optimization objectives, their interactions are difficult to characterize theoretically. Consequently, jointly optimizing these two modules does not necessarily lead to further improvements in segmentation performance. In contrast, the proposed grayscale level set framework completely decouples preprocessing from segmentation, providing a more flexible and modular optimization strategy.

The primary advantage of the proposed framework lies in its computational efficiency. Compared with the traditional variational joint model, the proposed method substantially reduces the computational cost while maintaining comparable segmentation accuracy. As the image resolution increases, the computational burden of the joint model grows rapidly, whereas the proposed incremental update strategy naturally supports parallel implementation and significantly accelerates the threshold optimization process. The quantitative comparisons in Tab.~\ref{tab:speed_comparison} further demonstrate the efficiency improvement achieved by the proposed implementation across different image resolutions.

\begin{table*}
	\centering
	\caption{CPU time}
	\label{tab:speed_comparison}
	\begin{tabular}{c|cccc}
		\hline
		\textbf{Image Size} & \textbf{Joint Model} & \textbf{Denoising} & \textbf{Segmentation} & \textbf{Algorithm \ref{alg:incremental update}}\\ 
		\hline
		100$\times$100   & 28.8018           & 2.0182  &0.1804 &\textbf{0.0306}  \\
		256$\times$256 & 46.8250           & 4.4410 & 0.5572 & \textbf{0.1052}  \\
		512$\times$512 & 116.0726            & 20.706  & 4.4265 & \textbf{0.3433}  \\
		\hline
	\end{tabular}
\end{table*}

\begin{figure}[htbp]
	\centering
	\begin{subfigure}{0.21\textwidth}
		\centering
		\includegraphics[width=0.8\textwidth]{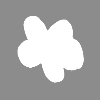}
		\caption{Original}
	\end{subfigure}
	\begin{subfigure}{0.21\textwidth}
		\centering
		\includegraphics[width=0.8\textwidth]{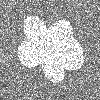}
		\caption{Noise}
	\end{subfigure}
	\begin{subfigure}{0.21\textwidth}
		\centering
		\includegraphics[width=0.8\textwidth]{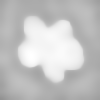}
		\caption{Denoised}
	\end{subfigure}
	\begin{subfigure}{0.21\textwidth}
		\centering
		\includegraphics[width=0.8\textwidth]{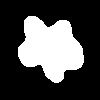}
		\caption{Joint model}
	\end{subfigure}
	\qquad
	\begin{subfigure}{0.21\textwidth}
		\centering
		\includegraphics[width=0.8\textwidth]{4.1pic/zym_flower.png}
		\caption{Original}
	\end{subfigure}
	\begin{subfigure}{0.21\textwidth}
		\centering
		\includegraphics[width=0.8\textwidth]{4.1pic/zym_noise.png}
		\caption{Noise}
	\end{subfigure}
	\begin{subfigure}{0.21\textwidth}
		\centering
		\includegraphics[width=0.8\textwidth]{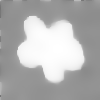}
		\caption{Denoised}
	\end{subfigure}
	\begin{subfigure}{0.21\textwidth}
		\centering
		\includegraphics[width=0.8\textwidth]{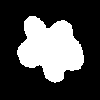}
		\caption{Ours}
	\end{subfigure}
	\caption{Comparison results.}
	\label{4.2.1}
\end{figure}


\subsection{Comparison with U-Net on MoNuSeg Dataset}
To further validate the geometric advantages of proposed grayscale level set method in medical image segmentation, we present a comparative experiment with the classical deep learning segmentation nerwork U-Net on the publicly available Multi-Organ Nucleus Segmentation (MoNuSeg) dataset. The MoNuSeg dataset consists of H\&E-stained histopathology images from various tissue sources, characterized by densely packed nuclei, blurred boundaries, and staining intensity inhomogeneity. 

\begin{figure}[htbp]
	\centering
	\begin{subfigure}{0.16\textwidth}
		\centering
		\includegraphics[scale=0.25]{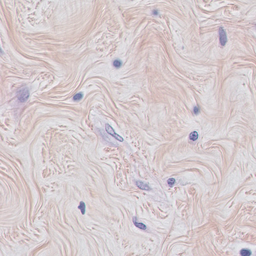}
		\caption{Original}
	\end{subfigure}
	\begin{subfigure}{0.16\textwidth}
		\centering
		\includegraphics[scale=0.25]{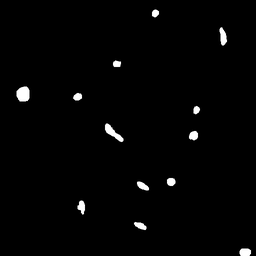}
		\caption{GT}
	\end{subfigure}
	\begin{subfigure}{0.16\textwidth}
		\centering
		\includegraphics[scale=0.25]{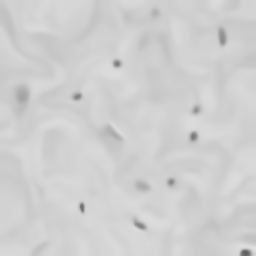}
		\caption{Processed}
	\end{subfigure}
	\begin{subfigure}{0.16\textwidth}
		\centering
		\includegraphics[scale=0.25]{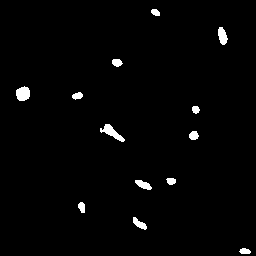}
		\caption{U-Net}
	\end{subfigure}
	\begin{subfigure}{0.16\textwidth}
		\centering
		\includegraphics[scale=0.25]{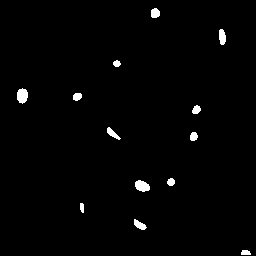}
		\caption{Ours}
	\end{subfigure}
	\caption{Comparison with Unet}
	\label{Comparison with Unet}
\end{figure}

\begin{table}
	\centering
	\caption{Segmentation Performance Metrics}
	\label{tab:Comparison with Unet}
	\begin{tabular}{cccc}
		\hline
		\textbf{Method} & \textbf{Dice} & \textbf{Length} & \textbf{MAC}\\
		\hline
		U-Net & 0.7076  & +94 & 0.3445 \\
		Ours & 0.6422  &\textbf{-8} & \textbf{0.3357}\\
		\hline
	\end{tabular}
\end{table}

Due to the powerful semantic feature extraction ability of deep network, U-Net has a good performance in the traditional indicators based on regional overlap such as Dice. 
Although U-Net achieves better overlop-based metrics, our method geometrically smoother counters and requires no training data.

\subsection{The Robustness of The Proposed Method}

As the high-frequency oscillation part of the image,residual noise destroys the gray distribution characteristics of the image, resulting in the degradation of the performance of the traditional segmentation methods. In practical applications, because the specific information of image noise can not be accurately obtained, blind denoising is often difficult to achieve the optimal effect, and there are inevitably three situations: \textbf{under-denoising(U)}, \textbf{appropriate-denoising(N)} and \textbf{over-denoising(O)}.

In this part, different denoisers are selected to adapt to different noise types. For additive Gaussian noise, traditional filter denoiser BM3D and deep learning denoiser DnCNN \cite{2016Beyond} are selected as mainstream traditional and network denoisers; For multiplicative gamma noise, we use the variational denoiser mentioned above (\ref{Zhou denoising}). Then, we adjust the intensity of denoising appropriately to simulate the blind denoising. 

\begin{table}
	\centering
	\caption{Segmentation Performance Metrics}
	\label{tab: anti-noise}
	\begin{tabular}{ccc}
		\hline
		\textbf{Noise Type} & \textbf{Method} & \textbf{Dice} \\
		\hline
		Additive & BM3D  & 0.9470$\pm$\textbf{0.0047} \\
		Additive & DnCNN  & 0.9461$\pm$\textbf{0.0019}\\
		Multiplicative & Zhou\cite{zhou2025variational} & 0.9831$\pm$\textbf{0.0027}\\
		\hline
	\end{tabular}
\end{table}

Tab \ref{tab: anti-noise}  quantitatively show that the proposed gray level set method can accurately segment the target region no matter for noise type and the segmentation accuracy has little difference for each denoising scene. This characteristic proves that the proposed method is indenpendent on accurate denoising pretreatment, and can resist noise interference. Fig. \ref{4.4.1}, \ref{4.4.2} illustrates the results of experiments.

\begin{figure}[htbp]
	\centering
	\begin{subfigure}{0.16\textwidth}
		\centering
		\includegraphics[width=0.8\textwidth]{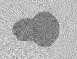}
		\caption{BM3D+U}
	\end{subfigure}
	\begin{subfigure}{0.16\textwidth}
		\centering
		\includegraphics[width=0.8\textwidth]{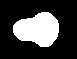}
		\caption{U}
	\end{subfigure}
	\begin{subfigure}{0.16\textwidth}
		\centering
		\includegraphics[width=0.8\textwidth]{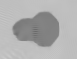}
		\caption{BM3D+N}
	\end{subfigure}
	\begin{subfigure}{0.16\textwidth}
		\centering
		\includegraphics[width=0.8\textwidth]{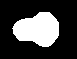}
		\caption{N}
	\end{subfigure}
	\begin{subfigure}{0.16\textwidth}
		\centering
		\includegraphics[width=0.8\textwidth]{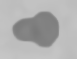}
		\caption{BM3D+O}
	\end{subfigure}
	\begin{subfigure}{0.16\textwidth}
		\centering
		\includegraphics[width=0.8\textwidth]{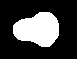}
		\caption{O}
	\end{subfigure}
	
	\begin{subfigure}{0.16\textwidth}
		\centering
		\includegraphics[width=0.8\textwidth]{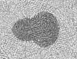}
		\caption{DnCNN+U}
	\end{subfigure}
	\begin{subfigure}{0.16\textwidth}
		\centering
		\includegraphics[width=0.8\textwidth]{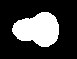}
		\caption{U}
	\end{subfigure}
	\begin{subfigure}{0.16\textwidth}
		\centering
		\includegraphics[width=0.8\textwidth]{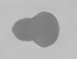}
		\caption{DnCNN+N}
	\end{subfigure}
	\begin{subfigure}{0.16\textwidth}
		\centering
		\includegraphics[width=0.8\textwidth]{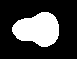}
		\caption{N}
	\end{subfigure}
	\begin{subfigure}{0.16\textwidth}
		\centering
		\includegraphics[width=0.8\textwidth]{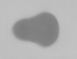}
		\caption{DnCNN+O}
	\end{subfigure}
	\begin{subfigure}{0.16\textwidth}
		\centering
		\includegraphics[width=0.8\textwidth]{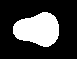}
		\caption{O}
	\end{subfigure}
	\caption{Additive Gaussian Noise}
	\label{4.4.1}
\end{figure}

\begin{figure}[htbp]
	\centering
	\begin{subfigure}{0.16\textwidth}
		\centering
		\includegraphics[width=0.8\textwidth]{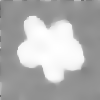}
		\caption{Zhou+U}
	\end{subfigure}
	\begin{subfigure}{0.16\textwidth}
		\centering
		\includegraphics[width=0.8\textwidth]{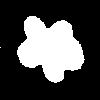}
		\caption{U}
	\end{subfigure}
	\begin{subfigure}{0.16\textwidth}
		\centering
		\includegraphics[width=0.8\textwidth]{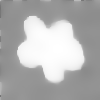}
		\caption{Zhou+N}
	\end{subfigure}
	\begin{subfigure}{0.16\textwidth}
		\centering
		\includegraphics[width=0.8\textwidth]{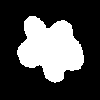}
		\caption{N}
	\end{subfigure}
	\begin{subfigure}{0.16\textwidth}
		\centering
		\includegraphics[width=0.8\textwidth]{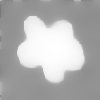}
		\caption{Zhou+O}
	\end{subfigure}
	\begin{subfigure}{0.16\textwidth}
		\centering
		\includegraphics[width=0.8\textwidth]{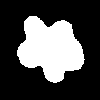}
		\caption{O}
	\end{subfigure}
	\caption{Multiplicative Gamma Noise}
	\label{4.4.2}
\end{figure}

\section{Conclusion}\label{sec5}

In this paper, we proposed a grayscale level set framework for fast image segmentation of multi-degraded images. Motivated by the observation that the data-fitting term alone is sufficient for segmentation on suitably smooth images, the proposed framework eliminates explicit geometric regularization and reformulates conventional level set evolution as a discrete threshold optimization problem. To improve computational efficiency, an incremental update strategy was further developed, reducing the overall computational complexity from $\mathcal{O}(MNS)$ to $\mathcal{O}(NS)$. The proposed framework provides a flexible and modular segmentation paradigm by decoupling image preprocessing from the segmentation stage. Under suitable smoothness assumptions, the grayscale level set offers a reduced search space for optimization while preserving the desired segmentation solution. Extensive experiments on both synthetic and real-world datasets demonstrate that the proposed method achieves competitive segmentation accuracy, significantly improved computational efficiency, and strong robustness under various image degradation conditions. 

Future work will focus on extending the proposed framework to multi-phase segmentation and more general imaging modalities, as well as developing adaptive preprocessing strategies to further improve robustness and reduce parameter tuning.

\section*{Acknowledgements}
This work is partially supported by National Natural Science Foundation of China (U21B2075, 12171123, 12271130) and the Fundamental Research Funds for the Central Universities (2022FRFK-060020).

\bibliographystyle{unsrtnat}
\bibliography{references.bib}  

@article{OSHER198812,
title = {Fronts propagating with curvature-dependent speed: Algorithms based on Hamilton-Jacobi formulations},
journal = {Journal of Computational Physics},
volume = {79},
number = {1},
pages = {12-49},
year = {1988},
issn = {0021-9991},
doi = {https://doi.org/10.1016/0021-9991(88)90002-2},
url = {https://www.sciencedirect.com/science/article/pii/0021999188900022},
author = {Stanley Osher and James A Sethian}
}

@article{1988Snakes,
  title={Snakes: Active Contour Models},
  author={ Kass, Michael  and  Witkin, Andrew  and  Terzopoulos, Demetri },
  journal={International Journal of Computer Vision},
  volume={1},
  number={4},
  pages={321-331},
  year={1988},
}

@INPROCEEDINGS{466871,
  author={Caselles, V. and Kimmel, R. and Sapiro, G.},
  booktitle={Proceedings of IEEE International Conference on Computer Vision}, 
  title={Geodesic active contours}, 
  year={1995},
  volume={},
  number={},
  pages={694-699},
  doi={10.1109/ICCV.1995.466871}}

@article{1989Optimal,
  title={Optimal approximations by piecewise smooth functions and associated variational problems},
  author={ Mumford, David  and  Shah, Jayant },
  journal={Communications on Pure \& Applied Mathematics},
  volume={42},
  number={5},
  pages={577-685},
  year={1989},
}

@ARTICLE{902291,
  author={Chan, T.F. and Vese, L.A.},
  journal={IEEE Transactions on Image Processing}, 
  title={Active contours without edges}, 
  year={2001},
  volume={10},
  number={2},
  pages={266-277},
  doi={10.1109/83.902291}}

@ARTICLE{5557813,
  author={Li, Chunming and Xu, Chenyang and Gui, Changfeng and Fox, Martin D.},
  journal={IEEE Transactions on Image Processing}, 
  title={Distance Regularized Level Set Evolution and Its Application to Image Segmentation}, 
  year={2010},
  volume={19},
  number={12},
  pages={3243-3254},
  doi={10.1109/TIP.2010.2069690}}

@ARTICLE{4623242,
  author={Li, Chunming and Kao, Chiu-Yen and Gore, John C. and Ding, Zhaohua},
  journal={IEEE Transactions on Image Processing}, 
  title={Minimization of Region-Scalable Fitting Energy for Image Segmentation}, 
  year={2008},
  volume={17},
  number={10},
  pages={1940-1949},
  doi={10.1109/TIP.2008.2002304}}

@ARTICLE{9109685,
  author={Yan, Shi and Tai, Xue-Cheng and Liu, Jun and Huang, Hai-Yang},
  journal={IEEE Transactions on Image Processing}, 
  title={Convexity Shape Prior for Level Set-Based Image Segmentation Method}, 
  year={2020},
  volume={29},
  number={},
  pages={7141-7152},
  doi={10.1109/TIP.2020.2998981}}

@article{LI2019123,
title = {A modified level set algorithm based on point distance shape constraint for lesion and organ segmentation},
journal = {Physica Medica},
volume = {57},
pages = {123-136},
year = {2019},
issn = {1120-1797},
doi = {https://doi.org/10.1016/j.ejmp.2018.12.032},
url = {https://www.sciencedirect.com/science/article/pii/S1120179718313681},
author = {Xu Li and Chunming Li and Hairong Liu and Xiaoping Yang}
}

@inproceedings{ronneberger2015u,
	title={U-net: Convolutional networks for biomedical image segmentation},
	author={Ronneberger, Olaf and Fischer, Philipp and Brox, Thomas},
	booktitle={International Conference on Medical image computing and computer-assisted intervention},
	pages={234--241},
	year={2015},
	organization={Springer}
}

@ARTICLE{5754584,
  author={Li, Chunming and Huang, Rui and Ding, Zhaohua and Gatenby, J. Chris and Metaxas, Dimitris N. and Gore, John C.},
  journal={IEEE Transactions on Image Processing}, 
  title={A Level Set Method for Image Segmentation in the Presence of Intensity Inhomogeneities With Application to MRI}, 
  year={2011},
  volume={20},
  number={7},
  pages={2007-2016},
  doi={10.1109/TIP.2011.2146190}}

@article{DBLP:journals/corr/abs-2010-11929,
  author       = {Alexey Dosovitskiy and
                  Lucas Beyer and
                  Alexander Kolesnikov and
                  Dirk Weissenborn and
                  Xiaohua Zhai and
                  Thomas Unterthiner and
                  Mostafa Dehghani and
                  Matthias Minderer and
                  Georg Heigold and
                  Sylvain Gelly and
                  Jakob Uszkoreit and
                  Neil Houlsby},
  title        = {An Image is Worth 16x16 Words: Transformers for Image Recognition
                  at Scale},
  journal      = {CoRR},
  volume       = {abs/2010.11929},
  year         = {2020},
  url          = {https://arxiv.org/abs/2010.11929},
  eprinttype   = {arXiv},
  eprint       = {2010.11929},
  bibsource    = {dblp computer science bibliography, https://dblp.org}
}

@InProceedings{10.1007/978-3-031-25066-8_9,
author="Cao, Hu
and Wang, Yueyue
and Chen, Joy
and Jiang, Dongsheng
and Zhang, Xiaopeng
and Tian, Qi
and Wang, Manning",
editor="Karlinsky, Leonid
and Michaeli, Tomer
and Nishino, Ko",
title="Swin-Unet: Unet-Like Pure Transformer for Medical Image Segmentation",
booktitle="Computer Vision -- ECCV 2022 Workshops",
year="2023",
publisher="Springer Nature Switzerland",
address="Cham",
pages="205--218",
isbn="978-3-031-25066-8"
}

@article{CHEN2024103280,
title = {TransUNet: Rethinking the U-Net architecture design for medical image segmentation through the lens of transformers},
journal = {Medical Image Analysis},
volume = {97},
pages = {103280},
year = {2024},
issn = {1361-8415},
doi = {https://doi.org/10.1016/j.media.2024.103280},
url = {https://www.sciencedirect.com/science/article/pii/S1361841524002056},
author = {Jieneng Chen and Jieru Mei and Xianhang Li and Yongyi Lu and Qihang Yu and Qingyue Wei and Xiangde Luo and Yutong Xie and Ehsan Adeli and Yan Wang and Matthew P. Lungren and Shaoting Zhang and Lei Xing and Le Lu and Alan Yuille and Yuyin}
}

@article{doi:10.1137/24M167411X,
author = {He, Roy Y. and Liu, Hao},
title = {Euler’s Elastica-Based Cartoon-Smooth-Texture Image Decomposition},
journal = {SIAM Journal on Imaging Sciences},
volume = {18},
number = {1},
pages = {526-569},
year = {2025},
doi = {10.1137/24M167411X},
URL = { 
        https://doi.org/10.1137/24M167411X
},
eprint = { 
        https://doi.org/10.1137/24M167411X
}
}

@article{RUDIN1992259,
title = {Nonlinear total variation based noise removal algorithms},
journal = {Physica D: Nonlinear Phenomena},
volume = {60},
number = {1},
pages = {259-268},
year = {1992},
issn = {0167-2789},
doi = {https://doi.org/10.1016/0167-2789(92)90242-F},
url = {https://www.sciencedirect.com/science/article/pii/016727899290242F},
author = {Leonid I. Rudin and Stanley Osher and Emad Fatemi}
}

@article{zhou_seg,
	title={A variational image segmentation model with intensity correction in the presence of high level multiplicative noise},
	author={Zhou, Yamei and Guo, Zhichang and Li, Yao and Wu, Boying},
	journal={Inverse Problems and Imaging},
	volume={19},
	number={5},
	pages={877--902},
	year={2025},
	publisher={Inverse Problems and Imaging}
}

@article{doi:10.1137/20M1310618,
author = {Li, Xu and Yang, Xiaoping and Zeng, Tieyong},
title = {A Three-Stage Variational Image Segmentation Framework Incorporating Intensity Inhomogeneity Information},
journal = {SIAM Journal on Imaging Sciences},
volume = {13},
number = {3},
pages = {1692-1715},
year = {2020},
doi = {10.1137/20M1310618},

URL = { 
    
        https://doi.org/10.1137/20M1310618
    
    

},
eprint = { 
    
        https://doi.org/10.1137/20M1310618
    
    

}
}

@ARTICLE{4271520,
  author={Dabov, Kostadin and Foi, Alessandro and Katkovnik, Vladimir and Egiazarian, Karen},
  journal={IEEE Transactions on Image Processing}, 
  title={Image Denoising by Sparse 3-D Transform-Domain Collaborative Filtering}, 
  year={2007},
  volume={16},
  number={8},
  pages={2080-2095},
  doi={10.1109/TIP.2007.901238}}

@article{WEN2021108049,
title = {Hybrid BM3D and PDE filtering for non-parametric single image denoising},
journal = {Signal Processing},
volume = {184},
pages = {108049},
year = {2021},
issn = {0165-1684},
doi = {https://doi.org/10.1016/j.sigpro.2021.108049},
url = {https://www.sciencedirect.com/science/article/pii/S0165168421000888},
author = {Ying Wen and Zhichang Guo and Wenjuan Yao and Dong Yan and Jiebao Sun}
}

@ARTICLE{5445030,
  author={Tustison, Nicholas J. and Avants, Brian B. and Cook, Philip A. and Zheng, Yuanjie and Egan, Alexander and Yushkevich, Paul A. and Gee, James C.},
  journal={IEEE Transactions on Medical Imaging}, 
  title={N4ITK: Improved N3 Bias Correction}, 
  year={2010},
  volume={29},
  number={6},
  pages={1310-1320},
  doi={10.1109/TMI.2010.2046908}}

@article{2023Segment,
 	title={Segment Anything},
 	author={ Kirillov, Alexander  and  Mintun, Eric  and  Ravi, Nikhila  and  Mao, Hanzi  and  Rolland, Chloe  and  Gustafson, Laura  and  Xiao, Tete  and  Whitehead, Spencer  and  Berg, Alexander C.  and  Lo, Wan Yen },
 	journal={IEEE},
 	year={2023},
 }

@ARTICLE{8341753,
 	author={Ali, Haider and Rada, Lavdie and Badshah, Noor},
 	journal={IEEE Transactions on Image Processing}, 
 	title={Image Segmentation for Intensity Inhomogeneity in Presence of High Noise}, 
 	year={2018},
 	volume={27},
 	number={8},
 	pages={3729-3738},
 	doi={10.1109/TIP.2018.2825101}}

@article{2016Beyond,
 	title={Beyond a Gaussian Denoiser: Residual Learning of Deep CNN for Image Denoising},
 	author={ Zhang, Kai  and  Zuo, Wangmeng  and  Chen, Yunjin  and  Meng, Deyu  and  Zhang, Lei },
 	journal={IEEE Transactions on Image Processing},
 	volume={26},
 	number={7},
 	pages={3142-3155},
 	year={2016},
 }

@article{doi:10.1137/24M1676612,
 	author = {Hsieh, Po-Wen and Tseng, Chung-Lin and Yang, Suh-Yuh},
 	title = {Additive-Bias-Correction Variational Model for Noisy and Intensity-Inhomogeneous Image Segmentation},
 	journal = {SIAM Journal on Imaging Sciences},
 	volume = {18},
 	number = {2},
 	pages = {1235-1259},
 	year = {2025},
 	doi = {10.1137/24M1676612},
 	
 	URL = { 
 	
 	https://doi.org/10.1137/24M1676612
 	
 	
 	
 	},
 	eprint = { 
 	
 	https://doi.org/10.1137/24M1676612
 	
 	
 	
 	}
 }

@article{ADALSTEINSSON1995269,
 	title = {A Fast Level Set Method for Propagating Interfaces},
 	journal = {Journal of Computational Physics},
 	volume = {118},
 	number = {2},
 	pages = {269-277},
 	year = {1995},
 	issn = {0021-9991},
 	doi = {https://doi.org/10.1006/jcph.1995.1098},
 	url = {https://www.sciencedirect.com/science/article/pii/S0021999185710984},
 	author = {David Adalsteinsson and James A. Sethian}
 }

@article{zhou2025variational,
 	title={A variational model to remove multiplicative noise based on SAR image feature preservation},
 	author={Zhou, Yamei and Guo, Zhichang and Li, Yao and Yao, Wenjuan and Wu, Boying},
 	journal={Inverse Problems and Imaging},
 	volume={19},
 	number={2},
 	pages={253--281},
 	year={2025},
 	publisher={Inverse Problems and Imaging}
 }

@article{liu2022deep,
 	title={Deep convolutional neural networks with spatial regularization, volume and star-shape priors for image segmentation},
 	author={Liu, Jun and Wang, Xiangyue and Tai, Xue-Cheng},
 	journal={Journal of Mathematical Imaging and Vision},
 	volume={64},
 	number={6},
 	pages={625--645},
 	year={2022},
 	publisher={Springer}
 }

@article{liu2025convex,
 	title={Convex Shape Prior for Deep Convolution Neural Network-Based Image Segmentation},
 	author={Liu, Jun and Zhang, Kehui and Tai, Xue-Cheng and Luo, Shousheng},
 	journal={Journal of Mathematical Imaging and Vision},
 	volume={67},
 	number={6},
 	pages={61},
 	year={2025},
 	publisher={Springer}
 }






\end{document}